\documentclass{article} 
\usepackage{src/iclr2021_conference,times}


\usepackage{amsmath,amsfonts,bm}









\def\eqref#1{equation~\ref{#1}}









\def\1{\bm{1}}










\DeclareMathAlphabet{\mathsfit}{\encodingdefault}{\sfdefault}{m}{sl}
\SetMathAlphabet{\mathsfit}{bold}{\encodingdefault}{\sfdefault}{bx}{n}













\usepackage[utf8]{inputenc} 
\usepackage[T1]{fontenc}    
\usepackage[colorlinks]{hyperref}       
\usepackage{url}            
\usepackage{booktabs}       
\usepackage{amsfonts}       
\usepackage{nicefrac}       
\usepackage{microtype}      
\usepackage{graphicx}

\usepackage{xcolor}
\usepackage{soul}

\usepackage{amsmath,bbm,amsthm}
\usepackage{amssymb}
\usepackage{mathtools}

\usepackage{wrapfig}

\newtheorem{thm}{Theorem}
\newtheorem{prop}{Proposition}

\theoremstyle{definition} \newtheorem{defn}{Definition}
\theoremstyle{lemma} \newtheorem{lemm}{Lemma}
\theoremstyle{assumption} \newtheorem{assump}{Assumption}
\theoremstyle{strategy} 
\theoremstyle{corollary}

\newcommand{\defeq}{\vcentcolon=}

\newcommand{\RR}{\mathbb{R}}
\newcommand{\FF}{\mathcal{F}}

\newcommand{\Nat}{\mathbb{N}}

\newcommand{\Code}{\url{https://github.com/yaohungt/Self_Supervised_Learning_Multiview}}

\iclrfinalcopy

\title{Self-supervised Learning from a Multi-view Perspective}


\author{Yao-Hung Hubert Tsai, Yue Wu, Ruslan Salakhutdinov,  Louis-Philippe Morency \\
Machine Learning Department, Carnegie Mellon University
}

%

\hypersetup{
    colorlinks = true,
    linkbordercolor = {white},
    citecolor={blue}
}

\begin{document}

\maketitle

\begin{abstract}

As a subset of unsupervised representation learning, self-supervised representation learning adopts self-defined signals as supervision and uses the learned representation for downstream tasks, such as object detection and image captioning. Many proposed approaches for self-supervised learning follow naturally a multi-view perspective, where the input (e.g., original images) and the self-supervised signals (e.g., augmented images) can be seen as two redundant views of the data. 
Building from this multi-view perspective, this paper provides an information-theoretical framework to better understand the properties that encourage successful self-supervised learning. Specifically, we demonstrate that self-supervised learned representations can extract task-relevant information and discard task-irrelevant information. Our theoretical framework paves the way to a larger space of self-supervised learning objective design. In particular, we propose a composite objective that bridges the gap between prior contrastive and predictive learning objectives, and introduce an additional objective term to discard task-irrelevant information. To verify our analysis, we conduct controlled experiments to evaluate the impact of the composite objectives. We also explore our framework's empirical generalization beyond the multi-view perspective, where the cross-view redundancy may not be clearly observed.

\end{abstract}

\vspace{-2mm}
\section{Introduction}
\label{sec:intro}

Self-supervised learning (SSL)~\citep{zhang2016colorful,devlin2018bert,oord2018representation,tian2019contrastive} learns representations using a proxy objective (i.e., SSL objective) between inputs and self-defined signals. Empirical evidence suggests that the learned representations can generalize well to a wide range of downstream tasks, even when the SSL objective has not utilize any downstream supervision during training. For example, SimCLR~\citep{chen2020simple} defines a contrastive loss (i.e., an SSL objective) between images with different augmentations (i.e., one as the input and the other as the self-supervised signal). 
Then, one can take SimCLR as features extractor and adopt the features to various computer vision applications, spanning image classification, object detection, instance segmentation, and pose estimation~\citep{he2019momentum}. Despite success in practice, only a few work~\citep{arora2019theoretical,lee2020predicting,Tosh2020ssl} provide theoretical insights into the learning efficacy of SSL. Our work shares a similar goal to explain the success of SSL,  from the perspectives of Information Theory~\citep{cover2012elements} and multi-view representation\footnote{The work~\citep{lee2020predicting,Tosh2020ssl} are done concurrent and in parallel, and part of their assumptions/ conclusions are similar to ours. We will elaborate the differences more in the related work section.}.

To understand (a subset\footnote{We discuss the limitations of the multi-view assumption in Section~\ref{subsec:2.1}.} of) SSL, we start by the following {\em multi-view assumption}. First, we regard the input and the self-supervised signals as two corresponding views of the data. Using our running example, in SimCLR~\citep{chen2020simple}, the augmented images (i.e., the input and the self-supervised signal) are an image with different views.
Second, we adopt a common assumption in multi-view learning: either view alone is (approximately) sufficient for the downstream tasks (see Assumption 1 in prior work~\citep{sridharan2008information}).  The assumption suggests that the image augmentations (e.g., changing the style of an image) should not affect the labels of images, or analogously, the self-supervised signal contains most (if not all) of the information that the input has about the downstream tasks.  With this assumption, our first contribution is to formally show that the self-supervised learned representations can 1) extract all the task-relevant information (from the input) with a potential loss; and 2) discard all the task-irrelevant information (from the input) with a fixed gap. Then, using classification task as an example, we are able the quantify the smallest generalization error (Bayes error rate) given the discussed task-relevant and task-irrelevant information.

As the second contribution, our analysis 
1) connects prior arts for SSL on contrastive~\citep{oord2018representation,bachman2019learning,chen2020simple,tian2019contrastive} and predictive learning~\citep{zhang2016colorful,vondrick2016generating,tulyakov2018mocogan,devlin2018bert} approaches; and 2) paves the way to a larger space of composing SSL objectives to extract task-relevant and discard task-irrelevant information simultaneously.
For instance, the combination between the contrastive and predictive learning approaches achieves better performance than contrastive- or predictive-alone objective and enjoys less over-fitting problem. We also present a new objective to discard task-irrelevant information. The objective can be easily incorporated with prior self-supervised learning objectives. 


We conduct controlled experiments on visual (the first set) and visual-textual (the second set) self-supervised representation learning. The first set of experiments are performed when the multi-view assumption is likely to hold. The goal is to compare different compositions of SSL objectives on extracting task-relevant and discarding task-irrelevant information. The second set of experiments are performed when the input and the self-supervised signal lie in very different modalities. Under this cross-modality setting, the task-relevant information may not mostly lie in the shared information between the input and the self-supervised signal. The goal is to examine SSL objectives' generalization, where the multi-view assumption is likely to fail.

\vspace{-1mm}
\section{A Multi-view Information-Theoretical Framework}
\label{sec:method}
\vspace{-1mm}


\paragraph{Notations.} For the input, we denote its random variable as $X$, sample space as $\mathcal{X}$, and outcome as $x$. We learn a representation ($Z_X$/ $\mathcal{Z}$/ $z_x$) from the input through a 
deterministic
mapping $F_X$: $Z_X = F_X(X)$. For the self-supervised signal, we denote its random variable/ sample space/ outcome as $S$/ $\mathcal{S}$/ $s$. Two sample spaces can be different between the input and the self-supervised signal: $\mathcal{X} \neq \mathcal{S}$. 
The information required for downstream tasks is referred to as ``task-relevant information'': $T$/ $\mathcal{T}$/ $t$. Note that SSL has no access to the task-relevant information. Lastly, we use $I(A; B)$ to represent mutual information, $I(A; B|C)$ to represent conditional mutual information, $H(A)$ to represent the entropy, and $H(A|B)$ to represent conditional entropy for random variables $A$/$B$/$C$. 
We provide high-level takeaways for our main results in Figure~\ref{fig:takeaways2}. We defer all proofs to Supplementary.
\vspace{-2mm}

\begin{figure}[t!]
\begin{center}
\vspace{-3mm}
\includegraphics[width=0.9\textwidth]{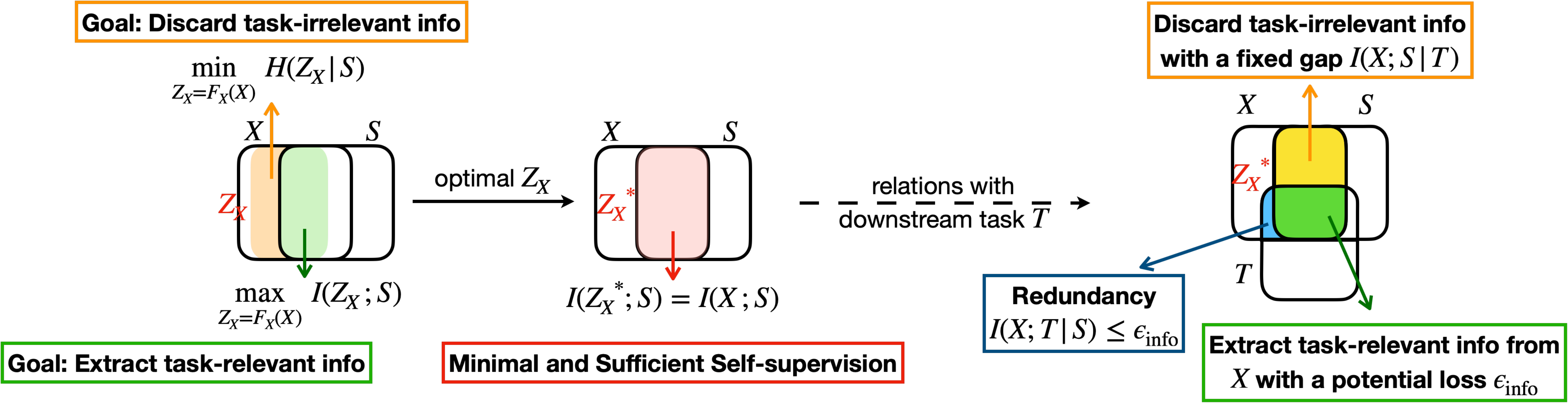}
\vspace{-3mm}
\caption{\small High-level takeaways for our main results using information diagrams. (a) We present to learn minimal and sufficient self-supervision: minimize $H(Z_X|S)$ for discarding task-irrelevant information and maximize $I(Z_X;S)$ for extracting task-relevant information. (b) The resulting 
learned representation ${Z_X}^*$ contains all task relevant information from the input with a potential loss $\epsilon_{\rm info}$ and discards task-irrelevant information with a fixed gap $I(X;S|T)$.
(c) Our core assumption: the self-supervised signal is approximately redundant to the input for the task-relevant information.}
\label{fig:takeaways2}
\end{center}
\vspace{-6mm}
\end{figure}

\subsection{Multi-view Assumption}
\label{subsec:2.1}

In our paper, we regard the input ($X$) and the self-supervised signals ($S$) as two views of the data. Here, we provide a table showing different $X$/$S$ in various SSL frameworks:
\begin{center}
\vspace{-5mm}
\resizebox{\textwidth}{!}{ 
\begin{tabular}{c|cccc}
\toprule
Framework & BERT~\citep{devlin2018bert}
& Look \& Listen~\citep{arandjelovic2017look}  
& SimCLR~\citep{chen2020simple} & Colorization~\citep{zhang2016colorful}
\\ \midrule
Inputs ($X$) & Non-masked Words & Image 
& Image
& Image Lightness
\\
Self-supervised Signals ($S$) & Masked Words & Audio Stream 
& Same Image with Augmentation
& Image Color
\\
\bottomrule
\end{tabular}
}
\vspace{-3mm}
\end{center}
We note that not all SSL frameworks realize the inputs and the self-supervised signals as corresponding views. For instance, Jigsaw puzzle~\citep{noroozi2016unsupervised} 
considers (shuffled) image patches as the input and the positions of the patches as the self-supervised signals. Another example is Learning by Predicting Rotations~\citep{gidaris2018unsupervised}, which considers an image (rotating with a specific angle) as the input and the rotation angle of the image as the self-supervised signal. We point out that the frameworks that regard $X/S$ as two corresponding views~\citep{chen2020generative,chen2020simple,he2019momentum} have a much better empirical downstream performance than the frameworks that do not~\citep{noroozi2016unsupervised,gidaris2018unsupervised}. Our paper hence focuses on the multi-view setting between $X/S$.

Next, we adopt the common assumption (i.e., {\em multi-view assumption}~\citep{sridharan2008information,xu2013survey}) in the multi-view learning between the input and the self-supervised signal:
\begin{assump}[Multi-view, restating Assumption 1 in prior work~\citep{sridharan2008information}]
The self-supervised signal is approximately redundant to the input for the task-relevant information. In other words, there exist an $\epsilon_{\rm info} > 0$ such that $I(X;T|S) \leq \epsilon_{\rm info}$. 
\label{assump:redundancy}
\end{assump}
\vspace{-1mm}

Assumption~\ref{assump:redundancy} states that, when $\epsilon_{\rm info}$ is small, the task-relevant information lies mostly in the shared information between the input and the self-supervised signals. We argue this assumption is mild with the following example. 
For self-supervised visual contrastive learning~\citep{hjelm2018learning,chen2020simple}, the input and the self-supervised signal are the same image with different augmentations. Using image augmentations can be seen as changing the style of an image while not affecting the content. And we argue that the information required for downstream tasks should only be retained in the content but not the style. Next, we point out the failure cases of the assumption (or have large $\epsilon_{\rm info}$): the input and the self-supervised signal contain very different task-relevant information. For instance, a drastic image augmentation (e.g., adding large noise) may change the content of the image (e.g., the noise completely occludes the objects). 
Another example is BERT~\citep{devlin2018bert},
with too much masking, 
downstream information may exist differently in the masked (i.e., the self-supervised signals) and the non-masked (i.e., the input) words. Analogously, too much masking makes the non-masked words have insufficient context to predict the masked words.

\subsection{Learning Minimal and Sufficient Representations for Self-supervision}
\label{subsec:2.2}
We start by discussing the supervised representation learning. The Information Bottleneck (IB) method~\citep{tishby2000information,achille2018emergence} generalizes minimal sufficient statistics to the representations that are minimal (i.e., less complexity) and sufficient (i.e., better fidelity). To learn such representations for downstream supervision, we consider the following objectives:
    \begin{defn}[Minimal and Sufficient Representations for Downstream Supervision] 
    Let $Z_X^{\rm sup}$ be the sufficient supervised representation and $Z_X^{\rm sup_{min}}$ be the minimal and sufficient representation:
    $$
	Z_X^{\rm sup} = \underset{Z_X}{\rm arg\,max}\,I(Z_X;T)\,\,{\rm and}\,\,Z_X^{\rm sup_{min}} = \underset{Z_X}{\rm arg\,min}\,H(Z_X|T)\,{\rm s.t.}\,I(Z_X;T)\,\,{\rm is\,\,maximized}.$$
	\label{defn:supervised}
	\vspace{-4mm}
	\end{defn}
To reduce the complexity of the representation $Z_X$, the prior methods~\citep{tishby2000information,achille2018emergence} presented to minimize $I(Z_X;X)$ while ours presents to minimize $H(Z_X|T)$.  We provide a justification:
minimizing $H(Z_X|T)$ reduces the randomness from $T$ to $Z_X$, and the randomness is regarded as a form of incompressibility~\citep{calude2013information}. Hence, minimizing $H(Z_X|T)$ leads to a more compressed representation (discarding redundant information)\footnote{We do not claim $H(Z_X|T)$ minimization is better than $I(Z_X;X)$ minimization for reducing the complexity in the representations $Z_X$. In Supplementary, we will show that  $H(Z_X|T)$ minimization and $I(Z_X;X)$ minimization are interchangeable under our framework's setting.}. Note that we do not constrain the downstream task $T$ as classification, regression, or clustering.



Then, we present SSL objectives to learn sufficient (and minimal) representations for self-supervision:
    \begin{defn}[Minimal and Sufficient Representations for Self-supervision]
        Let $Z_X^{\rm ssl}$ be the sufficient self-supervised representation and $Z_X^{\rm ssl_{min}}$ be the minimal and sufficient representation:
        \vspace{-1mm}
        $$
	Z_X^{\rm ssl} = \underset{Z_X}{\rm arg\,max}\,I(Z_X;S)\,\,{\rm and}\,\,Z_X^{\rm ssl_{min}} = \underset{Z_X}{\rm arg\,min}\,H(Z_X|S)\,{\rm s.t.}\,I(Z_X;S)\,\,{\rm is\,\,maximized}.$$ 
	\label{defn:self_ssupervised}
    \end{defn}
    \vspace{-3mm}

Definition~\ref{defn:self_ssupervised} defines our self-supervised representation learning strategy.
Now, we are ready to associate the supervised and self-supervised learned representations:
\begin{thm}[Task-relevant information with a potential loss $\epsilon_{\rm info}$] The supervised learned representations (i.e., $Z_X^{\rm sup}$ and $Z_X^{\rm sup_{min}}$) contain all the task-relevant information in the input (i.e., $I(X;T)$). The self-supervised learned representations (i.e., $Z_X^{\rm ssl}$ and $Z_X^{\rm ssl_{min}}$) contain all the task-relevant information in the input with a potential loss $\epsilon_{\rm info}$. Formally,
\begin{equation*}
\small
\begin{split}
    I(X;T) = I(Z_X^{\rm sup};T) = I(Z_X^{\rm sup_{min}};T) \geq I(Z_X^{\rm ssl};T) \geq I(Z_X^{\rm ssl_{\rm min}};T) \geq I(X;T) - \epsilon_{\rm info}.
\end{split}
\end{equation*}
\label{theorem:inclusion}
\end{thm}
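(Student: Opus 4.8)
The plan is to split the claimed chain in Theorem~\ref{theorem:inclusion} into three independent pieces — the two left equalities, the middle inequality between the two self-supervised representations, and the final $\epsilon_{\rm info}$ lower bound — each of which reduces to the data-processing inequality (DPI) applied to a Markov chain that exists simply because every admissible $Z_X$ is a deterministic function $F_X(X)$ of the input, so that $(S,T)\to X\to Z_X$.

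For the equalities and the upper bounds, DPI on $(S,T)\to X\to Z_X$ gives $I(Z_X;T)\le I(X;T)$ for every $Z_X$, with equality attained by taking $F_X$ to be the identity. Hence $I(Z_X^{\rm sup};T)=I(X;T)$, and since $Z_X^{\rm sup_{min}}$ is by Definition~\ref{defn:supervised} constrained to also maximize $I(Z_X;T)$, we get $I(Z_X^{\rm sup_{min}};T)=I(X;T)$ as well; the same DPI bound yields $I(X;T)\ge I(Z_X^{\rm ssl};T)$ and $I(X;T)\ge I(Z_X^{\rm ssl_{min}};T)$.

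For the final bound I would prove the stronger fact that \emph{every} $Z_X$ that is sufficient for $S$ (hence $Z_X^{\rm ssl_{min}}$, and also $Z_X^{\rm ssl}$) satisfies $I(Z_X;T)\ge I(X;T)-\epsilon_{\rm info}$. Applying the chain rule twice, $I(Z_X;S,T)=I(Z_X;S)+I(Z_X;T|S)=I(Z_X;T)+I(Z_X;S|T)$, so $I(Z_X;T)=I(Z_X;T|S)+I(Z_X;S)-I(Z_X;S|T)\ge I(Z_X;S)-I(Z_X;S|T)$. Now $I(Z_X;S)=I(X;S)$ by sufficiency and $I(Z_X;S|T)\le I(X;S|T)$ by DPI on $S\to X\to Z_X$ conditioned on $T$ (the conditioning is harmless since $F_X$ ignores $S$ and $T$); therefore $I(Z_X;T)\ge I(X;S)-I(X;S|T)=I(X;T)-I(X;T|S)\ge I(X;T)-\epsilon_{\rm info}$, where the middle equality is the symmetry of the interaction information $I(X;S;T)$ and the last step is Assumption~\ref{assump:redundancy}. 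For the middle inequality, note that both $Z_X^{\rm ssl}$ and $Z_X^{\rm ssl_{min}}$ attain $I(Z_X;S)=\max_{Z_X} I(Z_X;S)=I(X;S)$ (again DPI plus the identity map), i.e., both are sufficient for $S$; and $Z_X^{\rm ssl_{min}}$, being additionally of minimal $H(Z_X|S)$ by Definition~\ref{defn:self_ssupervised} (equivalently minimal $H(Z_X)$ among sufficient representations), is a deterministic function of any such $Z_X^{\rm ssl}$ — the minimal sufficient statistic, realizable as the likelihood map $x\mapsto p(S\mid X=x)$, which factors through every sufficient statistic. This gives $T\to Z_X^{\rm ssl}\to Z_X^{\rm ssl_{min}}$ and hence $I(Z_X^{\rm ssl};T)\ge I(Z_X^{\rm ssl_{min}};T)$ by DPI. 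Concatenating the three pieces proves the theorem.

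The step I expect to be the main obstacle is exactly this middle inequality: making rigorous, rather than by appeal to classical minimal-sufficient-statistic theory, that the $H(Z_X|S)$-minimizing sufficient representation is a deterministic coarsening of every sufficient one, given that $\argmax_{Z_X} I(Z_X;S)$ is highly non-unique. I would isolate it as a standalone lemma phrased in terms of the partition (or $\sigma$-algebra) of $\mathcal{X}$ induced by $F_X$; should a fully general version be awkward, the theorem can instead be stated for the canonical minimal sufficient representation obtained by coarsening a given $Z_X^{\rm ssl}$, which makes the relevant Markov chain hold by construction and leaves the rest of the argument untouched.
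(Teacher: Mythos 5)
Your decomposition coincides with the paper's proof on two of the three pieces: the supervised equalities and the upper bounds come from the same DPI-plus-determinism argument (the paper's determinism lemma gives $Z_X\perp (S,T)\mid X$, hence DPI, with achievability by the identity map), and your chain-rule computation for the lower bound, $I(Z_X;T)\ge I(Z_X;S)-I(Z_X;S|T)\ge I(X;S)-I(X;S|T)=I(X;T)-I(X;T|S)\ge I(X;T)-\epsilon_{\rm info}$ for any sufficient representation, is the same algebra the paper performs via the identity $I(Z_X;T)=I(X;T)-I(X;T|S)+I(Z_X;T|S)$ valid whenever $I(Z_X;S)=I(X;S)$. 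The genuine difference is the middle inequality $I(Z_X^{\rm ssl};T)\ge I(Z_X^{\rm ssl_{min}};T)$: the paper obtains it from that same identity by asserting that minimality of $H(Z_X|S)$ forces $I(Z_X^{\rm ssl_{min}};T|S)=0$, whereas you build the Markov chain $T\rightarrow Z_X^{\rm ssl}\rightarrow Z_X^{\rm ssl_{min}}$ via classical minimal sufficiency and invoke DPI. The lemma you flag as the obstacle is true and closable along exactly the lines you sketch: under the sufficiency constraint, $H(Z_X|S)=H(Z_X)-I(X;S)$, so minimizing $H(Z_X|S)$ is minimizing $H(Z_X)$; the canonical statistic $M(x)=p(S|X=x)$ is (a.s.) a function of every sufficient $Z_X$ and hence attains this minimum, and any other minimizer $Z$ satisfies $H(Z|M)=H(Z)-H(M)=0$ (in the discrete setting), i.e., is an a.s.\ relabeling of $M$; thus the chain holds for every choice of the non-unique minimizer and your fallback restatement is unnecessary. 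As for what each route buys: the paper's is shorter, but its step $I(Z_X^{\rm ssl_{min}};T|S)=0$ is not automatic --- it can fail when no sufficient function of $X$ is determined by $S$, so that the minimal $H(Z_X|S)$ is strictly positive --- while your DPI argument needs no such claim, at the price of the measure-theoretic care you anticipate; the theorem's final chain of inequalities is unaffected either way, since the $\epsilon_{\rm info}$ bound only uses non-negativity of $I(Z_X^{\rm ssl_{min}};T|S)$.
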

\vspace{-6mm}
When $\epsilon_{\rm info}$ is small, Theorem~\ref{theorem:inclusion} indicates that the self-supervised learned representations can extract almost as much task-relevant information as the supervised one. While when $\epsilon_{\rm info}$ is non-trivial, the learned representations may not always lead to good downstream performance. This result has also been observed in prior work~\citep{tschannen2019mutual} and InfoMin~\citep{tian2020makes}, which claim the representations with maximal mutual information may not have the best performance. 


\begin{thm}[Task-irrelevant information with a fixed compression gap $I(X;S|T)$] The sufficient self-supervised representation (i.e., $I(Z_X^{\rm ssl};T)$) contains more task-irrelevant information in the input than the sufficient and minimal self-supervised representation (i.e., $I(Z_X^{\rm ssl_{min}};T)$). The latter contains an amount of the information, $I(X;S|T)$, that cannot be discarded from the input. Formally,
\vspace{-1mm}
\begin{equation*}
\small
\begin{split}
    I(Z_X^{\rm ssl};X|T) = I(X;S|T) + I(Z_X^{\rm ssl};X|S,T) \geq I(Z_X^{\rm ssl_{\rm min}};X|T) = I(X;S|T) \geq I(Z_X^{\rm sup_{min}};X|T) = 0.
\end{split}
\end{equation*}
\label{theorem:compression_gap}
\end{thm}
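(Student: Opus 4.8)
The plan is to get the whole display by repeated use of the chain rule for (conditional) mutual information, exploiting three ingredients. (i) $Z_X$ is a deterministic function of $X$ through $F_X$, so $H(Z_X\mid X,\cdot)=0$ and hence $I(Z_X;A\mid X,B)=0$ for any $A,B$; this annihilates several cross terms. (ii) From Definition~\ref{defn:self_ssupervised} and the data-processing inequality, $I(Z_X^{\rm ssl};S)=\max_{Z_X}I(Z_X;S)=I(X;S)$, and the chain rule $I(X;S)=I(Z_X^{\rm ssl};S)+I(X;S\mid Z_X^{\rm ssl})$ then forces $I(X;S\mid Z_X^{\rm ssl})=0$ (sufficiency); applying the same reasoning to the minimal representations yields the characterizations from the supplement that $Z_X^{\rm ssl_{min}}$ is captured by $S$ (so $H(Z_X^{\rm ssl_{min}}\mid S)=0$) and that $I(Z_X^{\rm sup_{min}};X\mid T)=0$ — the latter being exactly the rightmost equality, and $I(X;S\mid T)\ge 0=I(Z_X^{\rm sup_{min}};X\mid T)$ the rightmost inequality. (iii) Conditional mutual information is non-negative, which is all the middle ``$\ge$'' needs once the middle equality is in hand.

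First I would prove the leftmost equality. Expand $I(X;S,Z_X^{\rm ssl}\mid T)$ two ways, as $I(X;Z_X^{\rm ssl}\mid T)+I(X;S\mid Z_X^{\rm ssl},T)$ and as $I(X;S\mid T)+I(X;Z_X^{\rm ssl}\mid S,T)$; equating them and using the key fact $I(X;S\mid Z_X^{\rm ssl},T)=0$ (discussed below) gives $I(Z_X^{\rm ssl};X\mid T)=I(X;S\mid T)+I(Z_X^{\rm ssl};X\mid S,T)$, i.e. the claimed decomposition. Next, the middle equality: run the same two-way expansion with $Z_X^{\rm ssl_{min}}$ in place of $Z_X^{\rm ssl}$; now I additionally need $I(Z_X^{\rm ssl_{min}};X\mid S,T)=0$, which follows from $H(Z_X^{\rm ssl_{min}}\mid S,T)\le H(Z_X^{\rm ssl_{min}}\mid S)=0$ (the minimality characterization) together with $H(Z_X^{\rm ssl_{min}}\mid X,S,T)=0$, plus the key fact $I(X;S\mid Z_X^{\rm ssl_{min}},T)=0$ again; this collapses the expansion to $I(Z_X^{\rm ssl_{min}};X\mid T)=I(X;S\mid T)$. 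The remaining ``$\ge$'' between the leftmost quantity and $I(Z_X^{\rm ssl_{min}};X\mid T)$ is then just $I(Z_X^{\rm ssl};X\mid S,T)\ge 0$, and the chain closes.

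The main obstacle is the ``key fact'' $I(X;S\mid Z_X^{\rm ssl},T)=0$ (and its $Z_X^{\rm ssl_{min}}$ analogue): sufficiency as extracted in (ii) is an \emph{unconditional} statement $I(X;S\mid Z_X^{\rm ssl})=0$, and conditional independence need not survive further conditioning on $T$. The way I would handle it is a squeeze on $H(S\mid Z_X^{\rm ssl},T)$: from above, $H(S\mid Z_X^{\rm ssl},T)\le H(S\mid Z_X^{\rm ssl})=H(S\mid X)$ by sufficiency and ``conditioning reduces entropy''; from below, $H(S\mid Z_X^{\rm ssl},T)\ge H(S\mid X,Z_X^{\rm ssl},T)=H(S\mid X,T)=H(S\mid X)$, where the last step uses that in the multi-view setting the self-supervised signal is produced from the input without reference to the task, i.e. $S\perp T\mid X$ (equivalently, $T$ being task-relevant information derived from $X$). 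Hence $H(S\mid Z_X^{\rm ssl},T)=H(S\mid X)=H(S\mid X,Z_X^{\rm ssl},T)$, so $I(X;S\mid Z_X^{\rm ssl},T)=0$; the same squeeze, with $Z_X^{\rm ssl_{min}}$ (also sufficient for $S$) in place of $Z_X^{\rm ssl}$, gives the analogue. This is the one point where the argument draws on the structure of the multi-view problem rather than on pure chain-rule bookkeeping and non-negativity.
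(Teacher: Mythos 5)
Your skeleton matches the paper's: decompose $I(Z_X;X\mid T)$ into $I(X;S\mid T)$ plus a residual, use sufficiency of $Z_X^{\rm ssl}$ for the leftmost equality, minimality to kill the residual terms, and nonnegativity of conditional mutual information for the middle inequality. You have also correctly isolated the crux: unconditional sufficiency only gives $I(X;S\mid Z_X^{\rm ssl})=0$, and what is needed is the $T$-conditioned statement $I(X;S\mid Z_X^{\rm ssl},T)=0$, equivalently $I(Z_X^{\rm ssl};S\mid T)=I(X;S\mid T)$. The problem is how you close that gap: your squeeze requires $H(S\mid X,T)=H(S\mid X)$, i.e. $I(S;T\mid X)=0$, and this is not among the paper's hypotheses. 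Lemma~\ref{lemm:determinism} gives only $Z_X \perp\!\!\!\perp (S,T)\mid X$, and Assumption~\ref{assump:redundancy} bounds $I(X;T\mid S)$, not $I(S;T\mid X)$ (that independence holds in the paper's Omniglot construction, but it is nowhere assumed in the theory). So as written you prove the theorem only under an extra hypothesis. The paper instead obtains the equivalent identity inside the proof of Theorem~\ref{theorem:inclusion}: it argues by DPI along $S\leftrightarrow T \leftrightarrow X \rightarrow Z_X$ that both $I(Z_X;S\mid T)\le I(X;S\mid T)$ and $I(Z_X;S;T)\le I(X;S;T)$, so that when the sum $I(Z_X;S)$ attains its maximum $I(X;S)$ each summand attains its own maximum, yielding $I(Z_X^{\rm ssl};S\mid T)=I(Z_X^{\rm ssl_{min}};S\mid T)=I(X;S\mid T)$; Theorem~\ref{theorem:compression_gap} then follows from $I(Z_X;X\mid T)=I(Z_X;S\mid T)+I(Z_X;X\mid S,T)$, which uses $I(Z_X;S\mid X,T)=0$ by determinism. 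To stay within the stated assumptions you need that argument (or another justification of the conditional step), not $I(S;T\mid X)=0$; your instinct that this step is delicate is sound---it is exactly where the conditioning-on-$T$ subtlety gets absorbed in the paper's proof.

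A second deviation: your handling of the minimal representations overshoots. You invoke $H(Z_X^{\rm ssl_{min}}\mid S)=0$ as a ``characterization from the supplement,'' but the supplement states no such thing, and it need not hold: the minimization in Definition~\ref{defn:self_ssupervised} is over deterministic functions of $X$ subject to $I(Z_X;S)$ being maximal, and when the channel from $X$ to $S$ is noisy no such function is determined by $S$, so the constrained minimum of $H(Z_X\mid S)$ can be strictly positive. What the paper actually uses (and all the chain needs) are the weaker assertions $I(Z_X^{\rm ssl_{min}};X\mid S,T)=0$ and $I(Z_X^{\rm sup_{min}};X\mid T)=0$, read off directly from the minimality of $H(Z_X\mid S)$ and of $H(Z_X\mid T)$ respectively. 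The rest of your write-up---the two-way chain-rule expansion, $I(X;S\mid Z_X^{\rm ssl})=0$ from maximality plus determinism, and $I(Z_X^{\rm ssl};X\mid S,T)\ge 0$ for the middle inequality---is correct and coincides with the paper's bookkeeping.
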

\vspace{-5mm}

 Theorem~\ref{theorem:compression_gap} indicates that a compression gap (i.e., $I(X;S|T)$) exists when we discard the task-irrelevant information from the input. To be specific, $I(X;S|T)$ is the amount of the shared information between the input and the self-supervised signal excluding the task-relevant information. Hence, $I(X;S|T)$ would be large if the downstream tasks requires only a portion of the shared information. 


\subsection{Connections with Contrastive and Predictive Learning Objectives}
\label{subsec:3}

Theorem~\ref{theorem:inclusion} and~\ref{theorem:compression_gap} state that our self-supervised learning strategies (i.e., ${\rm min}\,\,H(Z_X|S)$ and ${\rm max}\,I(Z_X;S)$ defined in Definition~\ref{defn:self_ssupervised}) can extract task-relevant and discard task-irrelevant information. A question emerges: 

\vspace{-1mm}
\begin{center}``{\em What are the practical aspects of the presented self-supervised learning strategies?}''
\end{center}
\vspace{-1mm}

To answer this question, we  present 1) the connections with prior SSL objectives, especially for contrastive~\citep{oord2018representation,bachman2019learning,chen2020simple,tian2019contrastive,hjelm2018learning,he2019momentum} and predictive~\citep{zhang2016colorful,pathak2016context,vondrick2016generating,tulyakov2018mocogan,peters2018deep,devlin2018bert} learning objectives, showing that these objectives are extracting task-relevant information; and 2) a new {\em inverse predictive learning} objective to discard task-irrelevant information. We illustrate important remarks in Figure~\ref{fig:pred_contra}.
 
\begin{figure}[t!]
\begin{center}
\vspace{-6mm}
\includegraphics[width=0.95\textwidth]{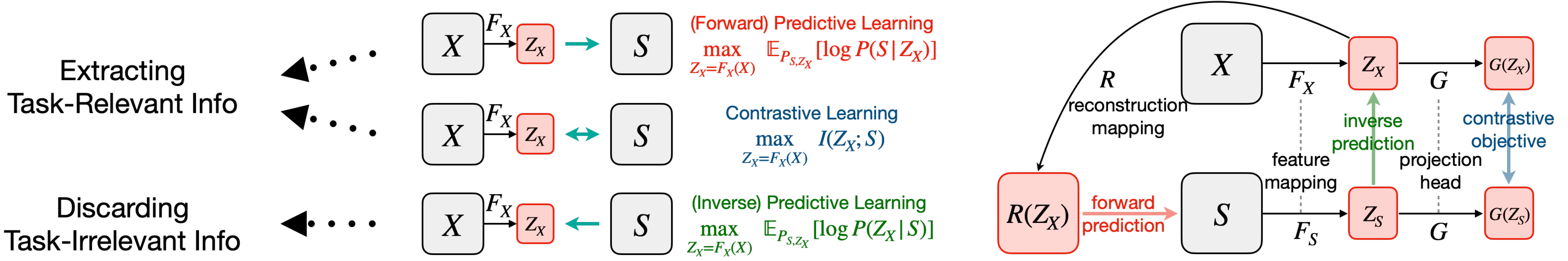}
\vspace{-2mm}
\caption{\small Remarks on contrastive and predictive learning objectives for self-supervised learning. Between the representation $Z_X$ and the self-supervised signal $S$, {\em contrastive objective} performs mutual information maximization and {\em predictive objectives} perform log conditional likelihood maximization. We show that the SSL objectives aim at extracting task-relevant and discarding task-irrelevant information. Last, we summarize the computational blocks for practical deployments for these objectives.}
\label{fig:pred_contra}
\end{center}
\vspace{-5mm}
\end{figure} 

\vspace{-2mm}
\paragraph{Contrastive Learning (is extracting task-relevant information).} 
Contrastive learning objective~\citep{oord2018representation} maximizes the dependency/contrastiveness between the learned representation $Z_X$ and the self-supervised signal $S$, which suggests maximizing the the mutual information $I(Z_X;S)$. Theorem~\ref{theorem:inclusion} suggests that maximizing $I(Z_X;S)$ results in $Z_X$ containing (approximately) all the information required for the downstream tasks from the input $X$. To deploy the contrastive learning objective,
we suggest contrastive predictive coding (CPC)~\citep{oord2018representation}\footnote{Other contrastive learning objectives can be other mutual information lower bounds such as DV-bound or NWJ-bound~\citep{belghazi2018mine} or its JS-divergence~\citep{poole2019variational,hjelm2018learning} variants. Among different objectives,~\citet{tschannen2019mutual} have suggested that the objectives with large variance (e.g., DV-/NWJ-bound~\citep{belghazi2018mine}) may lead to worsen performance compared to the low variance counterparts (e.g., CPC~\citep{oord2018representation} and JS-div.~\citep{poole2019variational}).}, which is a mutual information lower bound with low variance~\citep{poole2019variational,song2019understanding}:
\vspace{-1mm}
\begin{equation}
\footnotesize 
{\rm L}_{CL}:= \underset{\tiny \begin{matrix}
Z_{S}=F_S(S),  \\ 
Z_X=F_X(X), G 
\end{matrix}
}{\rm max}\,\,\mathbb{E}_{({z_s}_1,{z_x}_1), \cdots , ({z_s}_n,{z_x}_n)\sim  P^n(Z_S,Z_X)}\Bigg[\frac{1}{n}\sum_{i=1}^n{\rm log}\,\frac{e^{\langle G({z_x}_i),G({z_s}_i) \rangle }}{\frac{1}{n}\sum_{j=1}^n e^{\langle G({z_x}_i),G({z_s}_j) \rangle}}\Bigg],
\label{eq:prac_contra}
\end{equation}
where $F_S:\mathcal{S}\rightarrow \mathcal{Z}$ is a deterministic mapping and $G$ is a project head that projects a representation in $\mathcal{Z}$ into a lower-dimensional vector. If the input and self-supervised signals share the same sample space, i.e., $\mathcal{X}=\mathcal{S}$, we can impose $F_X=F_S$ (e.g., self-supervised visual representation learning~\citep{chen2020simple}). The projection head, $G$, can be an identity, a linear, or a non-linear mapping. Last, we note that modeling~\eqref{eq:prac_contra}
often requires a large batch size (e.g., large $n$ in~\eqref{eq:prac_contra}) to ensure a good downstream performance~\citep{he2019momentum,chen2020simple}.

\vspace{-1mm}
\paragraph{Forward Predictive Learning (is extracting task-relevant information).} Forward predictive learning encourages the learned representation $Z_X$ to reconstruct the self-supervised signal $S$, which suggests maximizing the log conditional likelihood $\mathbb{E}_{P_{S,Z_X}}[{\rm log}\,P(S|Z_X)]$. By the chain rule, $I(Z_X;S) = H(S) - H(S|Z_X)$, where $H(S)$ is irrelevant to $Z_X$. Hence, maximizing $I(Z_X;S)$ is equivalent to maximizing $-H(S|Z_X) = \mathbb{E}_{P_{S, Z_X}}[{\rm log}\,P(S|Z_X)]$, which is the predictive learning objective. Together with Theorem~\ref{theorem:inclusion}, if $z_x$ can perfectly reconstruct $s$ for any $(s, z_x)\sim P_{S,Z_X}$, then $Z_X$ contains (approximately) all the information required for the downstream tasks from the input $X$. A common approach to avoid intractability in computing $\mathbb{E}_{P_{S,Z_X}}[{\rm log}\,P(S|Z_X)]$ is assuming a variational distribution $Q_\phi(S|Z_X)$ with $\phi$ representing the parameters in $Q_\phi(\cdot|\cdot)$. 
Specifically, we present to maximize $\mathbb{E}_{P_{S, Z_X}}[{\rm log}\,Q_\phi(S|Z_X)]$, which is a lower bound of $\mathbb{E}_{P_{S, Z_X}}[{\rm log}\,P(S|Z_X)]$\footnote{\tiny $\mathbb{E}_{P_{S, Z_X}}[{\rm log}\,P(S|Z_X)] = \underset{Q_\phi}{\rm max}\,\mathbb{E}_{P_{S, Z_X}}[{\rm log}\,Q_\phi(S|Z_X)] + D_{\rm KL}\Big(P(S|Z_X)\,\|\,Q_\phi(S|Z_X)\Big) \geq \underset{Q_\phi}{\rm max}\,\mathbb{E}_{P_{S, Z_X}}[{\rm log}\,Q_\phi(S|Z_X)]$.}. 
$Q_\phi(\cdot|\cdot)$ can be any distribution such as Gaussian or Laplacian and $\phi$ can be a linear model, a kernel method, or a neural network. Note that the choice of the reconstruction type of loss depends on the distribution type of $Q_\phi(\cdot|\cdot)$, and is not fixed. For instance, if we let $Q_\phi (S|Z_X)$ be Gaussian $\mathcal{N}\Big(S|R(Z_X), \sigma {\bf I}  \Big)$ with $\sigma{\bf I}$ as a diagonal matrix\footnote{The assumption of identity covariance in the Gaussian is only a particular parameterization of the distribution $Q(\cdot | \cdot)$. Other examples are MocoGAN~\citep{tulyakov2018mocogan}, which assumes $Q$ is Laplacian (i.e., $\ell_1$ reconstruction loss) and $\phi$ is a deconvolutional network~\citep{long2015fully}. Transformer-XL~\citep{dai2019transformer} assumes $Q$ is a categorical distribution (i.e., cross entropy loss) and $\phi$ is a Transformer network~\citep{vaswani2017attention}. Although Gaussian with diagonal covariance is not the best assumption, it is perhaps the simplest one.}, the objective becomes:
\vspace{-2mm}
\begin{equation}
\footnotesize 
{\rm L}_{FP}:=\underset{Z_X=F_X(X), R}{\rm max}\,\,\mathbb{E}_{s,z_x\sim P_{S,Z_X}}\Bigg[- \|s - R(z_x)\|_2^2\Bigg],
\label{eq:prac_for_pred}
\end{equation}
where $R: \mathcal{Z}\rightarrow \mathcal{S}$ is a deterministic mapping to reconstruct $S$ from $Z$ and we ignore the constants derived from the Gaussian distribution. Last, in most real-world applications, the self-supervised signal $S$ has a much higher dimension (e.g., a $224\times 224\times 3$ image) than the representation $Z_X$ (e.g., a $64$-dim. vector). Hence, modeling a conditional generative model $Q_\phi(S|Z_X)$ will be challenging. 
\vspace{-1mm}
\paragraph{Inverse Predictive Learning (is discarding task-irrelevant information).} Inverse predictive learning encourages the self-supervised signal $S$ to reconstruct the learned representation $Z_X$, which suggests maximizing the log conditional likelihood $\mathbb{E}_{P_{S,Z_X}}[{\rm log}\,P(Z_X|S)]$.
Given Theorem~\ref{theorem:compression_gap} together with $ - H(Z_X|S) = \mathbb{E}_{P_{S, Z_X}}[{\rm log}\,P(Z_X|S)]$, we know if $s$ can perfectly reconstruct $z_x$ for any $(s, z_x)\sim P_{S,Z_X}$ under the constraint that $I(Z_X;S)$ is maximized, then $Z_X$ discards the task-irrelevant information, excluding $I(X;S|T)$. Similar to the forward predictive learning, we use $\mathbb{E}_{P_{S, Z_X}}[{\rm log}\,Q_\phi(Z_X|S)]$ as a lower bound of $\mathbb{E}_{P_{S, Z_X}}[{\rm log}\,P(Z_X|S)]$. In our deployment, we take the advantage of the design in~\eqref{eq:prac_contra} and let $Q_\phi (Z_X|S)$ be Gaussian $\mathcal{N}\Big(Z_X|F_S(S), \sigma {\bf I}  \Big)$:
\vspace{-2mm}
\begin{equation}
\footnotesize 
{\rm L}_{IP}:=\underset{Z_S = F_S(S), Z_X=F_X(X)}{\rm max}\,\,\mathbb{E}_{z_s,z_x\sim P_{Z_S, Z_X}}\Bigg[ -\|z_x-z_s\|_2^2\Bigg].
\label{eq:prac_inv_pred}
\end{equation}
Note that optimizing~\eqref{eq:prac_inv_pred} alone results in a degenerated solution, e.g., learning $Z_X$ and $Z_S$ to be the same constant.



{\bf Composing SSL Objectives (to extract task-relevant and discard task-irrelevant information simultaneously).} So far, we discussed how prior self-supervised learning approaches extract task-relevant information via the contrastive or the forward predictive learning objectives. Our analysis also inspires a new loss, the inverse predictive learning objective, to discard task-irrelevant information. Now, We present a composite loss to combine them together:
\begin{equation}
{\rm L}_{SSL} = \lambda_{CL} {\rm L}_{CL} + \lambda_{FP} {\rm L}_{FP} + \lambda_{IP} {\rm L}_{IP},
\label{eq:composition}
\end{equation}
where $\lambda_{CL}$, $\lambda_{FP}$, and $\lambda_{IP}$ are hyper-parameters. This composite loss enables us to extract task-relevant and discard task-irrelevant information simultaneously.

\subsection{Theoretical Analysis - Bayes Error Rate for Downstream Classification} 
\label{subsec:theory}
In last subsection, we see the practical aspects of our designed SSL strategies. Now, we provide an theoretical analysis on the representations' generalization error when $T$ is a categorical variable
. We use Bayes error rate as an example, which stands for the irreducible error (smallest generalization error~\citep{feder1994relations}) when learning an arbitrary classifier from the representation to infer the labels. In specific, let $P_e$ be the Bayes error rate of arbitrary learned representations $Z_X$ and $\hat{T}$ as the estimation for $T$ from our classifier,
$
P_e := \mathbb{E}_{z_x \sim P_{Z_X}}[1 - \underset{t \in T}{\rm max}\,{P(\hat{T}=t|z_x)}]
$.


To begin with, we present a general form of sample complexity with mutual information ($I(Z_X;S)$) estimation using empirical samples from distribution $P_{Z_X, S}$. Let $P_{Z_X, S}^{(n)}$ denote the (uniformly sampled) empirical distribution of $P_{Z_X, S}$ and $\hat{I}_\theta^{(n)} (Z_X; S) := \mathbb{E}_{P_{Z_X; S}^{(n)}}[\hat{f}_\theta (z_x, s)]$ with $\hat{f}_\theta$ being the estimated log density ratio (i.e., ${\rm log}\,p(s|z_x) / p(s)$).

\begin{prop}[Mutual Information Neural Estimation, restating Theorem 1 by~\citet{tsai2020neural}] Let $0 < \delta < 1$. There exists $d \in\Nat$ and a family of neural networks $\FF\defeq \{\hat{f}_\theta: \theta\in\Theta\subseteq\RR^d\}$ where $\Theta$ is compact, so that $\exists \theta^*\in\Theta$, with probability at least $1 - \delta$ over the draw of $\{{z_x}_i, s_i\}_{i=1}^n\sim P_{Z_X,S}^{\otimes n}$, $\left|\widehat{I}_{\theta^*}^{(n)}(Z_X; S) - I(Z_X; S)\right| \leq O\left(\sqrt{\frac{d + \log(1/\delta)}{n}}\right)$.
\label{theorem:sample}
\end{prop}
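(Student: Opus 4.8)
The statement is quoted verbatim from \citet{tsai2020neural}, so the plan is to reconstruct the standard argument that couples a \emph{representation} (approximation) result for the neural family with a \emph{uniform concentration} (estimation) result, and then to optimize the trade-off. First I would recall that $I(Z_X;S) = \mathbb{E}_{P_{Z_X,S}}[\log (p(s|z_x)/p(s))]$, so the true log-density ratio $f^\star(z_x,s) := \log (p(s|z_x)/p(s))$ is the target function, and $I(Z_X;S) = \mathbb{E}_{P_{Z_X,S}}[f^\star]$. The empirical estimator is $\widehat I_\theta^{(n)}(Z_X;S) = \mathbb{E}_{P_{Z_X,S}^{(n)}}[\hat f_\theta]$. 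I would split the error as
\begin{equation*}
\left|\widehat I_{\theta^*}^{(n)}(Z_X;S) - I(Z_X;S)\right| \;\le\; \underbrace{\left|\mathbb{E}_{P_{Z_X,S}^{(n)}}[\hat f_{\theta^*}] - \mathbb{E}_{P_{Z_X,S}}[\hat f_{\theta^*}]\right|}_{\text{estimation}} \;+\; \underbrace{\left|\mathbb{E}_{P_{Z_X,S}}[\hat f_{\theta^*}] - \mathbb{E}_{P_{Z_X,S}}[f^\star]\right|}_{\text{approximation}} .
\end{equation*}

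For the approximation term, I would invoke a universal-approximation statement: for any target accuracy $\eta>0$ there is a dimension $d$ and a network $\hat f_{\theta^\star}\in\FF$ with $\|\hat f_{\theta^\star} - f^\star\|_\infty \le \eta$ on the (assumed bounded, or sufficiently concentrated) support of $P_{Z_X,S}$; this controls the approximation term by $\eta$. The subtlety here is that $f^\star$ need not be bounded in general, so one either assumes the densities are bounded away from $0$ and $\infty$ (as is standard in this line of work, e.g.\ MINE-type analyses) or truncates; I would state this as the ambient regularity hypothesis implicitly carried over from \citet{tsai2020neural}.

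For the estimation term, since $\Theta$ is compact and the map $\theta \mapsto \hat f_\theta$ is (Lipschitz) continuous, the class $\FF$ has a covering number that scales like $(C/\rho)^d$ at resolution $\rho$; equivalently its Rademacher complexity over $n$ i.i.d.\ samples is $O(\sqrt{d/n})$. A uniform deviation bound via a standard symmetrization plus a covering/chaining argument (or McDiarmid applied to $\sup_\theta |\mathbb{E}_{P^{(n)}}[\hat f_\theta] - \mathbb{E}_{P}[\hat f_\theta]|$, whose bounded-difference constant is $O(1/n)$ given bounded $\hat f_\theta$) yields that, with probability at least $1-\delta$,
\begin{equation*}
\sup_{\theta\in\Theta}\left|\mathbb{E}_{P_{Z_X,S}^{(n)}}[\hat f_{\theta}] - \mathbb{E}_{P_{Z_X,S}}[\hat f_{\theta}]\right| \;\le\; O\!\left(\sqrt{\frac{d + \log(1/\delta)}{n}}\right),
\end{equation*}
where the $d$ comes from the metric entropy of $\FF$ and the $\log(1/\delta)$ from the Hoeffding/McDiarmid tail. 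Choosing $\theta^* = \theta^\star$ (the near-optimal approximator) and combining the two bounds gives the claim, absorbing $\eta$ into the $O(\cdot)$ by taking $\eta = O(\sqrt{d/n})$, which also fixes the value of $d$.

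The main obstacle is the interaction between the two terms: making the approximation error small forces $d = d(\eta)$ to grow, which inflates the estimation term through its $\sqrt{d/n}$ dependence. The right move is \emph{not} to send $\eta\to 0$ but to pick $\eta$ and $d$ jointly so that both terms are of order $\sqrt{d/n}$; since the statement only promises \emph{existence} of such a $d$, this balancing is exactly what the quantifier ``there exists $d\in\Nat$'' encodes, and it is legitimate to fold the resulting constant into the $O(\cdot)$. A secondary technical point I would be careful about is boundedness/integrability of $\hat f_\theta$ and $f^\star$ — needed both for McDiarmid and for the symmetrization step — which I would handle by the standard assumption that the relevant densities (and hence log-ratios) are uniformly bounded on the support, noting that this is inherited from the cited theorem rather than re-proved here.
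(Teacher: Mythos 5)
Your proposal follows essentially the same route as the paper's (sketch of a) proof: assume boundedness and Lipschitzness of $\hat{f}_\theta$, invoke universal approximation of neural networks for the log-density ratio, and control the estimation error uniformly over the compact parameter space via a covering-number bound of size $O(\rho^{-d})$, giving the $\sqrt{(d+\log(1/\delta))/n}$ rate — exactly the ingredients cited from \citet{tsai2020neural} and \citet{bartlett1998sample}. The decomposition, the regularity caveat on the density ratio, and the balancing of approximation versus estimation error all match the intended argument, so there is nothing substantive to add.
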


\vspace{-1mm}
This proposition shows that there exists a neural network $\theta^*$, with high probability, $\widehat{I}_{\theta^*}^{(n)}(Z_X; S)$ can approximate $I(Z_X; S)$ with $n$ samples at rate $O(1/\sqrt{n})$. Under this network $\theta^*$ and the same parameters $d$ and $\delta$, we are ready to present our main results on the Bayes error rate. Formally, let
$|T|$ be $T$'s cardinalitiy and ${\rm Th}(x) = {\rm min}\,\{{\rm max}\,\{x, 0\}, 1-1/|T|\}$ as a thresholding function: 
\begin{thm}[Bayes Error Rates for Arbitrary Learned Representations]
For an arbitrary learned representations $Z_X$, $P_e = {\rm Th}(\bar{P}_e)$ with
\vspace{-1.5mm}
\begin{equation*}
\small
\begin{split}
    \bar{P}_e \leq 1- {\rm exp}\, \bigg(-\Big( H(T) + I(X;S|T) +I (Z;X|S,T) - \hat{I}_{\theta^*}^{(n)} (Z_X; S) + O\big(\sqrt{\frac{d + \log(1/\delta)}{n}}\big)\Big) \bigg).
\end{split}
\end{equation*}
\label{theorem:bayes_mi}
\end{thm}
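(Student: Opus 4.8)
The plan is to connect the Bayes error rate $P_e$ to an information-theoretic quantity via a Fano-type inequality, and then rewrite that quantity in terms of the estimated mutual information plus the compression gaps identified in Theorems~\ref{theorem:inclusion} and~\ref{theorem:compression_gap}. First I would recall the (Hellman--Raviv / Fano) bound relating the Bayes error of predicting $T$ from $Z_X$ to the conditional entropy $H(T|Z_X)$: concretely, one has a bound of the form $\bar P_e \le 1 - \exp(-H(T|Z_X))$ (this is the exponential form of Fano's inequality, sometimes attributed to arguments bounding $H(T|Z_X) \le H_b(P_e) + P_e \log(|T|-1)$ and then inverting; the clean exponential version follows from $H(T|Z_X) \ge -\log(1-P_e)$ for the Bayes-optimal predictor). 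The thresholding function ${\rm Th}(\cdot)$ appears because the error is trivially bounded between $0$ and $1-1/|T|$ (random guessing among $|T|$ classes), so $P_e = {\rm Th}(\bar P_e)$ just clips the bound into the valid range.

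Second, I would expand $H(T|Z_X)$. Writing $H(T|Z_X) = H(T) - I(Z_X;T)$, the task is now to lower bound $I(Z_X;T)$, equivalently upper bound $H(T) - I(Z_X;T)$, using what the self-supervised objective controls, namely $I(Z_X;S)$. The key identity is a decomposition of $I(Z_X;S)$ into a task-relevant piece and task-irrelevant pieces. Using the chain rule for mutual information on $\{Z_X, S, T\}$ and the Markov structure $T \leftrightarrow X \leftrightarrow Z_X$ (so $Z_X$ is a function of $X$ and hence conditionally independent of $T$ given $X$), I would establish something like
\begin{equation*}
I(Z_X;S) = I(Z_X;T) - I(Z_X;T|S) + I(Z_X;S|T),
\end{equation*}
and further split $I(Z_X;S|T) = I(X;S|T) - I(X;S|Z_X,T) + I(Z_X;X|S,T)$ — or more directly just bound $I(Z_X;S|T) \le I(X;S|T) + I(Z_X;X|S,T)$ using the data processing structure, matching the $I(X;S|T)$ and $I(Z;X|S,T)$ terms in the theorem statement. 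Also $I(Z_X;T|S) \le I(X;T|S) \le \epsilon_{\rm info}$ by Assumption~\ref{assump:redundancy} and data processing; I suspect the stated bound actually drops this $\epsilon_{\rm info}$ term or absorbs it, so I would track it carefully and note where it goes (most likely the statement intends $I(Z_X;T|S)\ge 0$ only, giving a one-sided bound that does not need $\epsilon_{\rm info}$). Combining, $I(Z_X;T) \ge I(Z_X;S) - I(X;S|T) - I(Z_X;X|S,T)$, hence $H(T|Z_X) \le H(T) + I(X;S|T) + I(Z_X;X|S,T) - I(Z_X;S)$.

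Third, I would replace the true $I(Z_X;S)$ by its neural estimate $\hat I_{\theta^*}^{(n)}(Z_X;S)$ using Proposition~\ref{theorem:sample}: on the high-probability event, $I(Z_X;S) \ge \hat I_{\theta^*}^{(n)}(Z_X;S) - O(\sqrt{(d+\log(1/\delta))/n})$, so that
\begin{equation*}
H(T|Z_X) \le H(T) + I(X;S|T) + I(Z_X;X|S,T) - \hat I_{\theta^*}^{(n)}(Z_X;S) + O\Big(\sqrt{\tfrac{d+\log(1/\delta)}{n}}\Big).
\end{equation*}
Plugging into the exponential Fano bound $\bar P_e \le 1 - \exp(-H(T|Z_X))$ and applying ${\rm Th}$ yields the claim.

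The main obstacle I anticipate is getting the exact information decomposition to line up with the three terms $H(T) + I(X;S|T) + I(Z;X|S,T)$ exactly, rather than with extra slack terms — in particular handling the $I(Z_X;T|S)$ / $\epsilon_{\rm info}$ term cleanly (it must either vanish or be shown nonnegative-and-droppable for the inequality to hold as stated) and verifying that the Markov chain $T\leftrightarrow X\leftrightarrow Z_X$ legitimately licenses each data-processing step (e.g. $I(Z_X;S|T)\le I(X;S|T)+I(Z_X;X|S,T)$). A secondary, more routine obstacle is citing or re-deriving the precise exponential form of Fano's inequality with the correct constant so that the $1-\exp(-(\cdot))$ shape and the ${\rm Th}$ clipping are justified; I would state this as a lemma and defer its short proof.
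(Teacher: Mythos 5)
Your proposal is correct and follows essentially the same route as the paper's proof: the Feder--Merhav inequality $-\log(1-P_e)\le H(T|Z_X)$, the information-theoretic lower bound $I(Z_X;T)\ge I(Z_X;S)-I(X;S|T)-I(Z_X;X|S,T)$ obtained from chain-rule decompositions plus data processing under the determinism of $Z_X=F_X(X)$ (your version drops the nonnegative $I(Z_X;T|S)$, which is exactly how the stated bound avoids any $\epsilon_{\rm info}$ term), substitution of $\hat I_{\theta^*}^{(n)}(Z_X;S)$ via Proposition~\ref{theorem:sample}, and clipping to $[0,1-1/|T|]$ via ${\rm Th}$. The only blemish is your tentative ``identity'' $I(Z_X;S|T)=I(X;S|T)-I(X;S|Z_X,T)+I(Z_X;X|S,T)$, which is not exact (the correct identity has no $I(Z_X;X|S,T)$ term), but since you immediately fall back on the valid inequality $I(Z_X;S|T)\le I(X;S|T)+I(Z_X;X|S,T)$ the argument goes through unchanged.
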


\vspace{-4mm}

Given arbitrary learned representations ($Z_X$), Theorem~\ref{theorem:bayes_mi} suggests the corresponding Bayes error rate ($P_e$) is small when: 1) the estimated mutual information \big($\widehat{I}_{\theta^*}^{(n)}(Z_X; S)$\big) is large; 2) a larger number of samples $n$ are used for estimating the mutual information; and 3) the task-irrelevant information \big(the compression gap $I(X;S|T)$ and the superfluous information $I(Z;X|S,T)$, defined in Theorem~\ref{theorem:compression_gap}\big) is small. The first and the second results supports the claim that maximizing $I(Z_X;S)$ may learn the representations that are beneficial to downstream tasks. The third result implies the learned representations may perform better on the downstream task when the compression gap is small. Additionally, $Z^{\rm ssl_{min}}$ is preferable than $Z^{\rm ssl}$ since $I(Z^{\rm ssl_{min}};X|S,T) = 0$ and $I(Z^{\rm ssl};X|S,T) \geq 0$.

\begin{thm}[Bayes Error Rates for Self-supervised Learned Representations] Let $P_e^{\rm sup}$/$P_e^{\rm ssl}$/$P_e^{\rm ssl_{min}}$ be the Bayes error rate of the supervised or the self-supervised learned representations $Z_X^{\rm sup}$/$Z_X^{\rm ssl}$/$Z_X^{\rm ssl_{min}}$. Then, $P_e^{\rm ssl} = {\rm Th}(\bar{P}_e^{\rm ssl})$ and $P_e^{\rm ssl_{min}} = {\rm Th}(\bar{P}_e^{\rm ssl_{min}})$ with
{\small
$$
- \frac{  {\rm log}\,(1-P_e^{\rm sup})  + {\rm log}\,2 }{{\rm log}\,(|T|)}
\leq
\{\bar{P}_e^{\rm ssl}
,
\bar{P}_e^{\rm ssl_{min}}\}
\leq
1- {\rm exp}\,\Big(- ({\rm log}\,2+ P_e^{\rm sup} \cdot {\rm log}\,|T| + \epsilon_{\rm info})\Big) .
$$
}
\label{theorem:bayes_sup}
\end{thm}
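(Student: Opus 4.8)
The plan is to route the whole argument through the conditional entropy $H(T\mid Z_X)$, using two complementary facts that already underlie Theorem~\ref{theorem:bayes_mi}. The first is a Jensen-type bound: for any representation $Z_X$, $1-P_e\geq \exp(-H(T\mid Z_X))$, since for each $z_x$ one has $\max_t P(t\mid z_x)\geq \exp(-H(T\mid Z_X=z_x))$ (using $H(p)\geq -\log\|p\|_\infty$), and averaging over $z_x$ with convexity of $\exp$ gives the claim; equivalently, with $\bar P_e := 1-\exp(-H(T\mid Z_X))$ we have $P_e\leq\bar P_e$, and because $0\leq H(T\mid Z_X)\leq \log|T|$ this surrogate already lies in $[0,1-1/|T|]$, so ${\rm Th}(\bar P_e)=\bar P_e$. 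The second is Fano's inequality, $H(T\mid Z_X)\leq \log 2 + P_e\log|T|$ (bounding the binary-entropy term by $\log 2$ and $\log(|T|-1)$ by $\log|T|$), i.e.\ $P_e\geq (H(T\mid Z_X)-\log 2)/\log|T|$.

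Next I would fix the supervised baseline. Since $Z_X^{\rm sup}$ is sufficient, Theorem~\ref{theorem:inclusion} gives $I(Z_X^{\rm sup};T)=I(X;T)$, hence $H(T\mid Z_X^{\rm sup})=H(T)-I(X;T)=H(T\mid X)$, and the Bayes error $P_e^{\rm sup}$ of $Z_X^{\rm sup}$ equals the Bayes error computed from $X$ (this uses the Markov chain $T-X-Z_X^{\rm sup}$ together with $I(Z_X^{\rm sup};T)=I(X;T)$, which forces $P(T\mid X)=P(T\mid Z_X^{\rm sup})$ almost surely). Applying the two facts above to $Z_X^{\rm sup}$ then pins $H(T\mid Z_X^{\rm sup})$ from both sides: $-\log(1-P_e^{\rm sup})\leq H(T\mid Z_X^{\rm sup})\leq \log 2 + P_e^{\rm sup}\log|T|$, and this is the only place $P_e^{\rm sup}$ enters.

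Finally I would transfer to the self-supervised representations using Theorem~\ref{theorem:inclusion}: from $I(X;T)\geq I(Z_X^{\rm ssl};T)\geq I(Z_X^{\rm ssl_{min}};T)\geq I(X;T)-\epsilon_{\rm info}$ we get $H(T\mid Z_X^{\rm sup})\leq H(T\mid Z_X^{\rm ssl})\leq H(T\mid Z_X^{\rm ssl_{min}})\leq H(T\mid Z_X^{\rm sup})+\epsilon_{\rm info}$. For the upper bound, feed the rightmost inequality into the supervised Fano bound to get $H(T\mid Z_X^{\rm ssl})\leq \log 2 + P_e^{\rm sup}\log|T| + \epsilon_{\rm info}$, whence $\bar P_e^{\rm ssl}=1-\exp(-H(T\mid Z_X^{\rm ssl}))\leq 1-\exp(-(\log 2 + P_e^{\rm sup}\log|T| + \epsilon_{\rm info}))$, and the identical chain handles $\bar P_e^{\rm ssl_{min}}$. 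For the lower bound, combine $H(T\mid Z_X^{\rm ssl})\geq H(T\mid Z_X^{\rm sup})\geq -\log(1-P_e^{\rm sup})$ with Fano applied to $Z_X^{\rm ssl}$ itself: $P_e^{\rm ssl}\geq (H(T\mid Z_X^{\rm ssl})-\log 2)/\log|T|\geq -(\log(1-P_e^{\rm sup})+\log 2)/\log|T|$, and since the surrogate $\bar P_e^{\rm ssl}$ lies in $[0,1-1/|T|]$ it coincides with $P_e^{\rm ssl}$, giving the stated left-hand inequality; the same computation with $H(T\mid Z_X^{\rm ssl_{min}})\geq H(T\mid Z_X^{\rm sup})$ covers $\bar P_e^{\rm ssl_{min}}$.

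I expect the main difficulty to be the bookkeeping around ${\rm Th}$ and the surrogate $\bar P_e$ rather than any single inequality: one must verify that for each of the three representations the surrogate $1-\exp(-H(T\mid Z_X))$ genuinely stays in $[0,1-1/|T|]$, so that $P_e$ and $\bar P_e$ agree and the Fano-derived lower bound on $P_e^{\rm ssl}$ can be restated for $\bar P_e^{\rm ssl}$. A secondary subtlety is the sufficiency-to-Bayes-error reduction: that the Bayes error of $Z_X^{\rm sup}$ equals that of $X$ must be extracted from the mutual-information equality and the data-processing Markov chain, not assumed.
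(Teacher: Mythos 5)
Your proposal is correct and follows essentially the same route as the paper's proof: bound $-\log(1-P_e)\leq H(T\mid Z_X)$ and $H(T\mid Z_X)\leq \log 2+P_e\log|T|$ (Feder--Merhav and Fano), convert Theorem~\ref{theorem:inclusion} into the conditional-entropy chain $H(T\mid Z_X^{\rm sup})\leq H(T\mid Z_X^{\rm ssl}),H(T\mid Z_X^{\rm ssl_{min}})\leq H(T\mid Z_X^{\rm sup})+\epsilon_{\rm info}$, and chain the two directions before thresholding into $[0,1-1/|T|]$. The only differences are cosmetic: your Jensen re-derivation of the exponential bound and the detour identifying $P_e^{\rm sup}$ with the Bayes error computed from $X$ are not needed, since the paper applies both inequalities to $Z_X^{\rm sup}$ directly.
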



\vspace{-4mm}

Given our self-supervised learned representations ($Z_X^{\rm ssl}$ and $Z_X^{\rm ssl_{min}}$), Theorem~\ref{theorem:bayes_sup} suggests a smaller upper bound of $P_e^{\rm ssl}$ (or $P_e^{\rm ssl_{min}}$) when the redundancy between the input and the self-supervised signal ($\epsilon_{\rm info}$, defined in Assumption~\ref{assump:redundancy}) is small. This result implies the self-supervised learned representations may perform better on the downstream task when the multi-view redundancy is small.

\vspace{-1mm}
\section{Controlled Experiments}
\label{sec:exp}
\vspace{-1mm}

This section aims at providing empirical supports for Theorems~\ref{theorem:inclusion} and~\ref{theorem:compression_gap} and comparing different SSL objectives. In particular, we present information inequalities in Theorems 1 and 2 regarding the amount of the task-relevant and the task-irrelevant information that will be extracted and discarded when learning self-supervised representations. Nonetheless, quantifying the information is notoriously hard and often leads to inaccurate quantifications in practice~\citep{mcallester2020formal,song2019understanding}. Not to mention the information we aim to quantify is the conditional information, which is believed to be even more challenging than quantifying the unconditional one~\citep{poczos2012nonparametric}. To address this concern, we instead study the generalization error of the self-supervised learned representations, theoretically (Bayes error rate discussed in Section~\ref{subsec:theory}) and empirically (test performance discussed in this section). 

Another important aspect of the experimental design is examining~\eqref{eq:composition}, which can be viewed as a Lagrangian relaxation to learn representations that contain minimal and sufficient self-supervision (see Definition~\ref{defn:self_ssupervised}): a weighted combination between $I(Z_X; S)$ and $-H(Z_X|S)$. In particular, the contrastive loss ${\rm L}_{CL}$ and the forward-predictive loss ${\rm L}_{FP}$ represent different realizations of modeling $I(Z_X; S)$ and the inverse-predictive loss ${\rm L}_{FP}$ represents a realization of modeling $-H(Z_X|S)$.

We design two sets of experiments: The first one is when the input and self-supervised signals lie in the same modality (visual) and are likely to satisfy the multi-view redundancy assumption (Assumption~\ref{assump:redundancy}). The second one is when the input and self-supervised signals lie in very different modalities (visual and textual), thus challenging the SSL objective's generalization ability.

\vspace{-3mm}
\paragraph{Experiment I - Visual Representation Learning.} We use Omniglot dataset~\citep{lake2015human}
\footnote{More complex datasets such as CIFAR10~\citep{krizhevsky2009learning} or ImageNet~\citep{deng2009imagenet}, to achieve similar performance, require a much larger training scale from contrastive to forward predictive objective. For example, on ImageNet, MoCo~\citep{he2019momentum} uses $8$ GPUs for its contrastive objective and  ImageGPT~\citep{chen2020generative} uses $2048$ TPUs for its forward predictive objective. We choose the Omniglot to ensure fair comparisons among different self-supervised learning objectives under reasonable computation constraint.}
in this experiment. The training set contains images from $964$ characters, and the test set contains $659$ characters. There are no characters overlap between the training and test set. Each character contains twenty examples drawn from twenty different people. We regard image as input ($X$) and generate self-supervised signal ($S$) by first sampling an image from the same character as the input image and then applying translation/ rotation to it. Furthermore, we represent task-relevant information ($T$) by the labels of the image. Under this self-supervised signal construction, the exclusive information in $X$ or $S$ are drawing styles (i.e., by different people) and image augmentations, and only their shared information contribute to $T$. To formally show the later, if $T$ representing the label for $X$/$S$, then $P(T|X)$ and $P(T|S)$ are Dirac. Hence, $T  \perp\!\!\!\perp S | X$ and $T  \perp\!\!\!\perp X | S$, suggesting Assumption~\ref{assump:redundancy} holds. 

We train the feature mapping $F_X(\cdot)$ with SSL objectives (see eq.~\eqref{eq:composition}), set $F_S(\cdot) = F_X(\cdot)$, let $R(\cdot)$ be symmetrical to $ F_X(\cdot)$, and $G(\cdot)$ be an identity mapping. 
On the test set, we fix the mapping and randomly select $5$ examples per character as the labeled examples. Then, we classify the rest of the examples using the 1-nearest neighbor classifier based on feature (i.e., $Z_X = F_X(X)$) cosine similarity. The random performance on this task stands at $\frac{1}{659}\approx 0.15\%$ . One may refer to Supplementary for more details. 

\begin{figure}[t!]
\vspace{-8mm}
\centering
\includegraphics[width=0.8\textwidth]{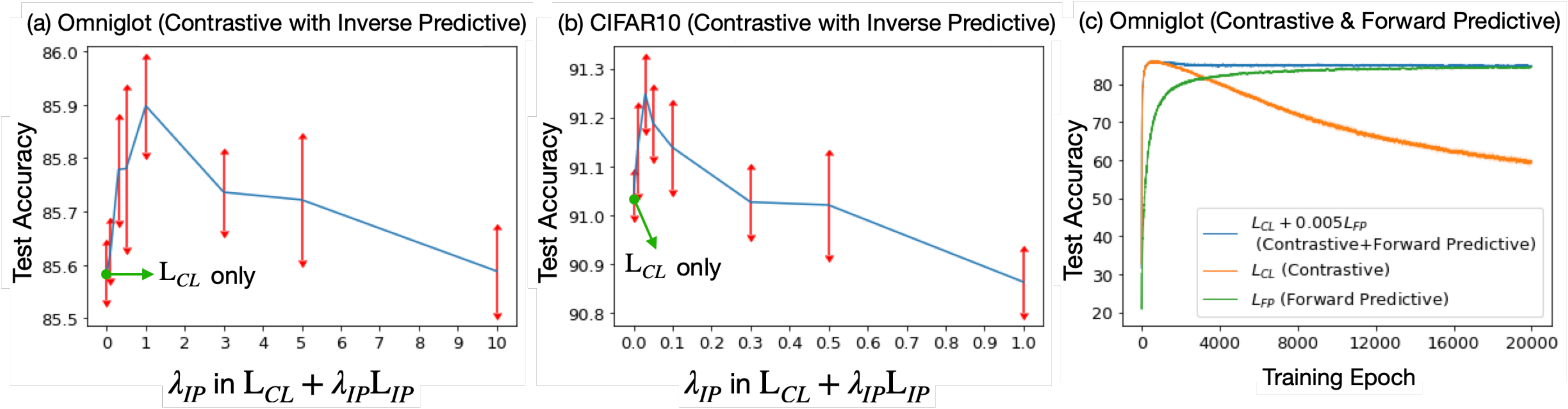}
\vspace{-4mm}
\caption{\small Comparisons for different compositions of SSL objectives on Omniglot and CIFAR10.
}
\label{fig:comparisons}
\vspace{-5mm}
\end{figure}

\vspace{-1mm}
\underline{\em $\triangleright\,\,$ Results \& Discussions.} $\,\,$ In Figure~\ref{fig:comparisons}, we evaluate the generalization ability on the test set for different SSL objectives. First, we examine how the introduced inverse predictive learning objective ${\rm L}_{IP}$ can help improve the performance along with the contrastive learning objective ${\rm L}_{CL}$. We present the results in Figure~\ref{fig:comparisons} (a) and also provide experiments with SimCLR~\citep{chen2020simple} on CIFAR10~\citep{krizhevsky2009learning} in Figure~\ref{fig:comparisons} (b), where $\lambda_{IP}=0$ refers to the exact same setup as in SimCLR (which considers only ${\rm L}_{CL}$). We find that adding ${\rm L}_{IP}$ in the objective can boost model performance, although being sensitive to the hyper-parameter $\lambda_{IP}$. According to Theorem~\ref{theorem:compression_gap}, the improved performance suggests a more compressed representation results in better performance for the downstream tasks. 
Second, we add the discussions with the forward predictive learning objective ${\rm L}_{FP}$. We present the results in Figure~\ref{fig:comparisons} (c). Comparing to  ${\rm L}_{FP}$, ${\rm L}_{CL}$ 1) reaches better test accuracy; 2) requires shorter training epochs to reach the best performance; and 3) suffers from overfitting with long-epoch training. Combining both of them (${\rm L}_{CL} + 0.005{\rm L}_{FP}$) brings their advantages together.

\vspace{-3mm}
\paragraph{Experiment II - Visual-Textual Representation Learning.}  
We provide experiments using MS COCO dataset~\citep{coco} that contains $328$k multi-labeled images with $2.5$ million labeled instances from $91$ objects. Each image has $5$ annotated captions describing the relationships between objects in the scenes. We regard image as input ($X$) and its textual descriptions as self-supervised signal ($S$). Since vision and text are two very different modalities, the multi-view redundancy may not be satisfied, which means $\epsilon_{\rm info}$ may be large in Assumption~\ref{assump:redundancy}.

We adopt ${\rm L}_{\rm CL}$ (+$\lambda_{IP}{\rm L}_{\rm IP}$) as our SSL objective. We use ResNet18~\citep{he2016deep} image encoder for $F_X(\cdot)$ (trained from scratch or fine-tuned on ImageNet~\citep{deng2009imagenet} pre-trained weights), BERT-uncased~\citep{devlin2018bert} text encoder for $F_S(\cdot)$ (trained from scratch or BookCorpus~\citep{zhu2015aligning}/Wikipedia pre-trained weights), and a linear layer for $G(\cdot)$. After performing self-supervised visual-textual representation learning, we consider the downstream multi-label classification over $91$ categories.  We evaluate learned visual representation ($Z_X$) using {\em downstream linear evaluation protocol}~\citep{oord2018representation,henaff2019data,tian2019contrastive,hjelm2018learning,bachman2019learning,tschannen2019mutual}. Specifically, a linear classifier is trained from the self-supervised learned (fixed) representation to the labels on the training set. Commonly used metrics for multi-label classification are reported on MS COCO validation set: Micro ROC-AUC
and Subset Accuracy.
One may refer to Supplementary for more details on these metrics. 

\vspace{-1mm}
\underline{\em $\triangleright\,\,$ Results \& Discussions.} $\,\,$ 
First, Figure~\ref{fig:mscoco} (a) suggests that the SSL strategy can still work when the input and self-supervised signals lie in different modalities. For example, 
pre-trained ResNet with BERT (either raw or the pre-trained one) outperforms pre-trained ResNet alone. We also see that the self-supervised learned representations benefit more if the ResNet is pre-trained but not the BERT. This result is in accord with the fact that object recognition requires more understanding in vision, and hence the pre-trained ResNet is preferrable than the pre-trained BERT. Next, Figure~\ref{fig:mscoco} (b) suggests that the self-supervised learned representations can be further improved by combining ${\rm L}_{CL}$ and ${\rm L}_{IP}$, suggesting ${\rm L}_{IP}$ may be a useful objective to discard task-irrelevant information.

\begin{figure}[t!]
\centering
\vspace{-8mm}
\begin{minipage}{0.45\linewidth}
\resizebox{\linewidth}{!}
{\begin{tabular}{ccc}
\multicolumn{3}{c}{\large (a) MS COCO (Using ${\rm L}_{CL}$ as SSL objective)}      \\
\toprule
Setting & Micro ROC-AUC & Subset Acc.  \\
\midrule
\multicolumn{3}{c}{Cross-modality Self-supervised Learning} \\
\midrule
Raw BERT + Raw ResNet   &  $0.5963 \pm 0.0034$ & $0.0166 \pm 0.0017$ \\
Pre-trained BERT + Raw ResNet  & $0.5915 \pm 0.0035$ & $0.0163 \pm 0.0011$ \\
Raw BERT + Pre-trained ResNet & $0.7049 \pm 0.0040$  & $0.2081 \pm 0.0063$ \\
Pre-trained BERT + Pre-trained ResNet           &  $0.7065 \pm 0.0026$  & $0.2123 \pm 0.0040$ \\
\midrule
\multicolumn{3}{c}{Non Self-supervised Learning} \\
\midrule
Only Pre-trained ResNet & $0.6761 \pm 0.0045$ & $0.1719 \pm 0.0015$ \\
\bottomrule
\end{tabular}
}
\end{minipage}
\begin{minipage}{0.54\textwidth}
\includegraphics[width=\textwidth]{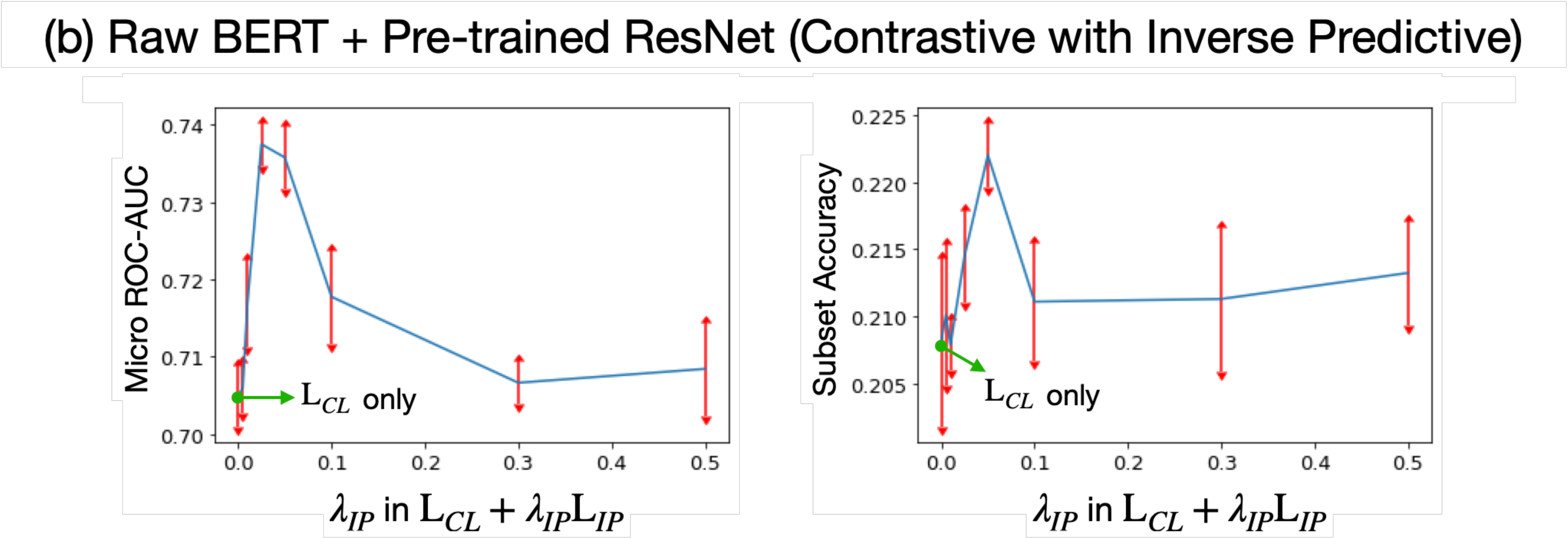}
\end{minipage}
\vspace{-4mm}
\caption{\small Comparisons for different settings on self-supervised visual-textual representation training. We report metrics on MS COCO validation set with mean and standard deviation from $5$ random trials. 
}
\label{fig:mscoco}
\vspace{-5mm}
\end{figure}

\paragraph{Remarks on $\lambda_{IP}$ and ${\rm L}_{IP}$.} As observed in the experimental results, $\lambda_{IP}$ is a sensitive hyper-parameter to the performance. We provide an optimization perspective to address this concern. Note that one of the our goals is to examine the setting when learning the minimal and sufficient representations for self-supervision (see Definition~\ref{defn:self_ssupervised}): minimize $H(Z_X|S)$ under the constraint that $I(Z_X; S)$ is maximized. However, this constrained optimization is not feasible when considering gradients methods in neural networks. Hence, our approach can be seen as its Lagrangian Relaxation by a weighted combination between ${\rm L}_{CL}$ (or ${\rm L}_{FP}$, representing $I(Z_X; S)$) and ${\rm L}_{IP}$ (representing $H(Z_X|S)$) with the $\lambda_{IP}$ being the Lagrangian coefficient. 

The optimal $\lambda_{IP}$ can be obtained by solving the Lagrangian dual, which depends on the parametrization of ${\rm L}_{CL}$ (or ${\rm L}_{FP}$) and ${\rm L}_{IP}$. Different parameterizations lead to different loss and gradient landscapes, and hence the optimal $\lambda_{IP}$ differs across experiments. This conclusion is verified by the results presented in Figure~\ref{fig:comparisons} (a) and (b) and Figure~\ref{fig:mscoco} (b). Lastly, we point out that even not solving the Lagrangian dual, an empirical observation across experiments is that $\lambda_{IP}$ which leads to the best performance is when the scale of ${\rm L}_{IP}$ is one-tenth to the scale of ${\rm L}_{CL}$ (or ${\rm L}_{FP}$). 
\vspace{-1mm}
\section{Related Work}
\vspace{-1mm}

Prior work by~\citet{arora2019theoretical} and the recent concurrent work~\citep{lee2020predicting,Tosh2020ssl} are landmarks for theoretically understanding the success of SSL. In particular,~\citet{arora2019theoretical,lee2020predicting} showed a decreased sample complexity for downstream supervised tasks when adopting contrastive learning objectives~\citep{arora2019theoretical} or predicting the known information in the data~\citep{lee2020predicting}.~\citet{Tosh2020ssl} showed that
the linear functions of the learned representations are nearly optimal on downstream prediction tasks. By viewing the input and the self-supervised signal as two corresponding views of the data, we discuss the differences among these works and ours. On the one hand, the work by~\citet{arora2019theoretical,lee2020predicting} assume strong independence between the views conditioning on the downstream tasks
, i.e., $I(X;S|T) \approx 0$. On the other hand, the work by~\citet{Tosh2020ssl} and ours assume strong independence between the downstream task and one view conditioning on the other view, i.e., $I(T;X|S) \approx 0$. 
Prior work~\citep{balcan2005co,du2010does} have compared these two assumptions and pointed out the former one ($I(X;S|T) \approx 0$) is too strong and not likely to hold in practice. We note that all these related work and ours have shown that the self-supervised learning methods are learning to extract task-relevant information. Our work additionally presents to discard task-irrelevant information and quantifies the amount of information that cannot be discarded.

Our method also resembles the InfoMax principle~\citep{linsker1988self,hjelm2018learning} and the Multi-view Information Bottleneck method~\citep{federici2020learning}. 
The InfoMax principle aims at preserving the information of itself, while ours aims at extracting the information in the self-supervised signal. On the other hand, to reduce the redundant information across views, the Multi-view Information Bottleneck method proposed to minimize the conditional mutual information $I(Z_X;X|S)$
, while ours propose to minimize the conditional entropy $H(Z_X|S)$. 
The conditional entropy minimization problem can be easily optimized via our proposed inversed predictive learning objective.

Another related work is InfoMin~\citep{tian2020makes}, where both InfoMin and our method suggest to learn the representations that contain ``not'' too much information. In particular, InfoMin presents to augment the data (i.e., by constructing learnable data augmentations) such that the shared information between augmented variants is as minimal as possible, followed by the mutual information maximization between the learned features from the augmented variants. Our method instead considers standard augmentations (e.g., rotations and translations), followed by learning representations that contain no more than the shared information between the augmented variants of the data.

On the empirical side, we explain why contrastive~\citep{oord2018representation,bachman2019learning,chen2020simple} and predictive learning~\citep{zhang2016colorful,pathak2016context,vondrick2016generating,chen2020generative} approaches can unsupervised extract task-relevant information. Different from these work, we present an objective to discard task-irrelevant information and show its combination with existing contrastive or predictive objectives benefits the performance.

\vspace{-1mm}
\section{Conclusion}
\vspace{-1mm}




This work studies both theoretical and empirical perspectives on self-supervised learning. We show that the self-supervised learned representations could extract task-relevant information (with a potential loss) and discard task-irrelevant information (with a fixed gap), along with their practical deployments such as contrastive and predictive learning objectives.
We believe this work sheds light on the advantages of self-supervised learning and may help better understand when and why self-supervised learning is likely to work. 
In the future, we plan to connect our framework and recent SSL methods that cannot be easily fit into our analysis: e.g.,  BYOL~\citep{grill2020bootstrap}, SWAV~\citep{caron2020unsupervised}, and Unifromality-Alignment~\citep{wang2020understanding}.




\section*{Acknowledgement}
This work was supported in part by the NSF IIS1763562, NSF Awards \#1750439 \#1722822, National Institutes of Health, IARPA
D17PC00340, ONR Grant N000141812861, and Facebook PhD Fellowship.
We would also like to acknowledge NVIDIA's GPU support.

{
\small
\bibliography{pred_contrast}
\bibliographystyle{src/iclr2021_conference}
}

\appendix

\section{Remarks on Learning Minimal and Sufficient Representations}
In the main text, we discussed the objectives to learn minimal and sufficient representations (Definition 1). Here, we discuss the similarities and differences between the prior methods~\citep{tishby2000information,achille2018emergence} and ours. First, to obtain sufficient representations (for the downstream task $T$), all the methods presented to maximize $I(Z_X;T)$. Then, to maintain minimal amount of information in the representations, the prior methods~\citep{tishby2000information,achille2018emergence} presented to minimize $I(Z_X;X)$ and the ours presents to minimize $H(Z_X|T)$. Our goal is to relate $I(Z_X;X)$ minimization and $H(Z_X|T)$ minimization in our framework.

To begin with, under the constraint $I(Z_X;T)$ is maximized, we see that minimizing $I(Z_X;X)$ is equivalent to minimizing $I(Z_X;X|T)$. The reason is that $I(Z_X;X) = I(Z_X;X|T) + I(Z_X;X;T)$, where $I(Z_X;X;T)=I(Z_X;T)$ due to the determinism from $X$ to $Z_X$ (our framework learns a deterministic function from $X$ to $Z_X$) and $I(Z_X;T)$ is maximized in our constraint. Then, $I(Z_X;X|T) = H(Z_X|T) - H(Z_X|X,T)$, where $H(Z_X|T)$ contains no randomness (no information) as $Z_X$ being deterministic from $X$. Hence, $I(Z_X;X|T)$ minimization and $H(Z_X|T)$ minimization are interchangeable. 

The same claim can be made from the downstream task $T$ to the self-supervised signal $S$. In specific, when $X$ to $Z_X$ is deterministic, $I(Z_X;X|S)$ minimization and $H(Z_X|S)$ minimization are interchangeable. As discussed in the related work section, for reducing the amount of the redundant information,~\citet{federici2020learning} presented to use $I(Z_X;X|S)$ minimization and ours presented to use $H(Z_X|T)$ minimization. We also note that directly minimizing the conditional mutual information (i.e., $I(Z_X;X|S)$) requires a min-max optimization~\citep{mukherjee2020ccmi}, which may cause instability in practice. To overcome the issue,  \citet{federici2020learning} assumes a Gaussian encoder for $X \rightarrow Z_X$ and presents an upper bound of the original objective.

\section{Proofs for Theorem 1 and 2}

We start by presenting a useful lemma from the fact that $F_X(\cdot)$ is a deterministic function:

\begin{lemm}[Determinism]
If $P(Z_X|X)$ is Dirac, then the following conditional independence holds: $T  \perp\!\!\!\perp Z_X | X$ and $S  \perp\!\!\!\perp Z_X | X$, inducing a Markov chain $S\leftrightarrow T \leftrightarrow X \rightarrow Z_X$. 
\label{lemm:determinism}
\end{lemm}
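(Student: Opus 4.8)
The plan is to derive everything from the single structural fact that $Z_X = F_X(X)$ is a deterministic map, so that the Markov kernel $P(Z_X \mid X)$ is a point mass $\delta_{F_X(x)}$ at $X = x$. First I would record the elementary observation that conditioning a point mass on additional variables leaves it unchanged: for any value $X = x$, the conditional law of $Z_X$ given $(X,T) = (x,t)$ is still concentrated at $F_X(x)$, hence does not depend on $t$; that is, $P(Z_X \mid X, T) = P(Z_X \mid X)$, which is exactly $T \perp\!\!\!\perp Z_X \mid X$. The identical argument with $S$ in place of $T$ gives $S \perp\!\!\!\perp Z_X \mid X$. To keep this density-free (densities need not exist for $X$ or $S$), I would phrase it via bounded test functions: $\Exp\big[\, g(Z_X)\, h(T) \mid X \,\big] = g(F_X(X))\, \Exp[\, h(T) \mid X\,]$ for all bounded measurable $g, h$, and likewise for $S$, which is the conditional independence in its cleanest form.

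Next I would package these two conditional independences into the joint factorization $P(S, T, X, Z_X) = P(S, T, X)\, P(Z_X \mid X)$: indeed $P(Z_X \mid S, T, X) = P(Z_X \mid X)$ follows from the same point-mass argument applied to the pair $(S,T)$ simultaneously. This factorization says precisely that $Z_X$ is a child of $X$ alone in the graphical model on $\{S, T, X, Z_X\}$, so it attaches as a leaf to $X$ on top of whatever dependence structure the triple $(S, T, X)$ already carries in the multi-view picture (with $T$ sitting between $S$ and $X$, as in Figure~\ref{fig:takeaways2}(c)); this yields the chain $S \leftrightarrow T \leftrightarrow X \rightarrow Z_X$. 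In particular, any conditional independence valid for $(S, T, X)$ is inherited after adjoining $Z_X$, since $Z_X$ carries no information beyond $X$ — this is the form in which the lemma will be consumed by the proofs of Theorems~\ref{theorem:inclusion} and~\ref{theorem:compression_gap}, typically through the data-processing inequality $I(Z_X; \cdot) \le I(X; \cdot)$ and the identity $I(Z_X; X; T) = I(Z_X; T)$.

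I do not expect a genuine obstacle here: the only point requiring care is the measure-theoretic phrasing of "a Dirac kernel stays Dirac under further conditioning'' when joint densities may not exist, which is why I would argue via conditional expectations of bounded functions rather than ratios of densities. A secondary, purely expository matter is to be explicit that the left segment $S \leftrightarrow T \leftrightarrow X$ of the displayed chain reflects the ambient multi-view structure among those three variables rather than something established in this lemma; the lemma's own content is the two conditional independences and the attachment $\cdots \rightarrow Z_X$, all of which are immediate consequences of determinism of $F_X$.
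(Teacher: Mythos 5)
Your argument is correct and essentially identical to the paper's: both establish that, because $Z_X=F_X(X)$ is deterministic, the conditional law of $Z_X$ given $X$ together with $T$ and/or $S$ coincides with that given $X$ alone (the paper phrases this via conditional expectations of indicators $\mathbb{E}[\mathbf{1}_{[Z_X\in A]}\mid X,\{T,S\}]=\mathbb{E}[\mathbf{1}_{[Z_X\in A]}\mid X]$, you via bounded test functions), which yields $T\perp\!\!\!\perp Z_X\mid X$ and $S\perp\!\!\!\perp Z_X\mid X$ and attaches $Z_X$ as a leaf of $X$ in the chain $S\leftrightarrow T\leftrightarrow X\rightarrow Z_X$. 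Your closing remark that the segment $S\leftrightarrow T\leftrightarrow X$ comes from the ambient multi-view structure rather than from the lemma itself is also consistent with how the paper uses the result.
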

\vspace{-3mm}
\begin{proof}
When $Z_X$ is a deterministic function of $X$, for any $A$ in the sigma-algebra induced by $Z_X$ we have $\mathbb{E}[{\bf 1}_{[Z_X\in A]}|X,\{T,S\}]=
\mathbb{E}[{\bf 1}_{[Z_X\in A]}|X,S]=
\mathbb{E}[{\bf 1}_{[Z_X\in A]}|X]$, which implies $T  \perp\!\!\!\perp Z_X | X$ and $S  \perp\!\!\!\perp Z_X | X$.
\end{proof}

Theorem 1 and 2 in the main text restated:

\begin{thm}[Task-relevant information with a potential loss $\epsilon_{\rm info}$, restating Theorem 1 in the main text] The supervised learned representations (i.e., $I(Z_X^{\rm sup};T)$ and $I(Z_X^{\rm sup_{min}};T)$) contain all the task-relevant information in the input (i.e., $I(X;T)$). The self-supervised learned representations (i.e., $I(Z_X^{\rm ssl};T)$ and $I(Z_X^{\rm ssl_{min}};T)$) contain all the task-relevant information in the input with a potential loss $\epsilon_{\rm info}$. Formally,
\begin{equation*}
\begin{split}
    I(X;T) = I(Z_X^{\rm sup};T) = I(Z_X^{\rm sup_{min}};T) \geq I(Z_X^{\rm ssl};T) \geq I(Z_X^{\rm ssl_{\rm min}};T) \geq I(X;T) - \epsilon_{\rm info}.
\end{split}
\end{equation*}
\label{theorem:inclusion2}
\end{thm}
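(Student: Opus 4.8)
The plan is to establish the chain of (in)equalities from left to right, relying on the Markov structure $S \leftrightarrow T \leftrightarrow X \rightarrow Z_X$ guaranteed by Lemma~\ref{lemm:determinism}, together with the data processing inequality and the defining optimization problems in Definitions~\ref{defn:supervised} and~\ref{defn:self_ssupervised}. First I would handle the supervised side: since $Z_X = F_X(X)$ is a deterministic function of $X$, the data processing inequality gives $I(Z_X; T) \leq I(X; T)$ for every candidate $Z_X$; taking $Z_X = X$ (or any injective map) attains equality, so the maximizer satisfies $I(Z_X^{\rm sup}; T) = I(X; T)$. Because $Z_X^{\rm sup_{min}}$ is constrained to also maximize $I(Z_X; T)$, we get $I(Z_X^{\rm sup_{min}}; T) = I(X;T)$ as well, which settles the two equalities on the far left.

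Next I would argue $I(X;T) \geq I(Z_X^{\rm ssl}; T)$. The key observation is that $Z_X^{\rm ssl}$ maximizes $I(Z_X; S)$, not $I(Z_X;T)$, so it need not capture all of $I(X;T)$; but it is still a deterministic function of $X$, hence by DPI $I(Z_X^{\rm ssl};T) \leq I(X;T)$. For the middle inequality $I(Z_X^{\rm ssl};T) \geq I(Z_X^{\rm ssl_{min}};T)$ I would show that any sufficient self-supervised representation $Z_X$ (one attaining $\max I(Z_X;S)$) satisfies $I(Z_X;S) = I(X;S)$ — again by DPI plus the fact that $Z_X = X$ is feasible — and then use that once $I(Z_X;S)$ is pinned at $I(X;S)$, the induced Markov chain $S \leftrightarrow Z_X \leftrightarrow \dots$ and the determinism let one write $I(Z_X;T) = I(Z_X;S;T) + I(Z_X;T|S)$; the minimality constraint on $Z_X^{\rm ssl_{min}}$ can only shrink the second term (this is where I need to be careful — see below), so $I(Z_X^{\rm ssl_{min}};T) \leq I(Z_X^{\rm ssl};T)$.

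Finally, the lower bound $I(Z_X^{\rm ssl_{min}};T) \geq I(X;T) - \epsilon_{\rm info}$: here I would use that a sufficient self-supervised representation has $I(Z_X;S) = I(X;S)$, which by the chain rule and the Markov structure forces $Z_X$ to retain all the information $X$ shares with $S$. Concretely, $I(X;T) = I(X;T|S) + I(X;S;T) \leq \epsilon_{\rm info} + I(X;S;T)$ by Assumption~\ref{assump:redundancy}, and $I(X;S;T) = I(S;T) - I(S;T|X) = I(S;T)$ since $S \perp\!\!\!\perp T \mid X$ is false — rather, I should use $I(X;S;T)\le I(Z_X;S;T)$ because sufficiency ($I(Z_X;S)=I(X;S)$) means $Z_X$ preserves the full interaction of $X$ with $S$, hence with anything downstream of $S$; then $I(Z_X;S;T) \leq I(Z_X;T)$. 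Chaining these, $I(X;T) - \epsilon_{\rm info} \leq I(Z_X^{\rm ssl_{min}};T)$.

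I expect the main obstacle to be the precise justification that imposing the minimality constraint $\min H(Z_X|S)$ (equivalently $\min I(Z_X;X|S)$ under determinism, as argued in the appendix) does not destroy task-relevant information beyond the $\epsilon_{\rm info}$ budget — i.e.\ that the lower bound survives the further compression from $Z_X^{\rm ssl}$ to $Z_X^{\rm ssl_{min}}$. The clean way is to show that \emph{every} minimal sufficient self-supervised representation has $I(Z_X^{\rm ssl_{min}};X|S) = 0$, so $Z_X^{\rm ssl_{min}}$ is a function of the "$S$-relevant part" of $X$ only, and then verify that this part already contains at least $I(X;T)-\epsilon_{\rm info}$ bits about $T$ via the redundancy assumption; getting the interaction-information bookkeeping exactly right (signs of $I(Z_X;X;T)$-type terms, which need not be nonnegative in general but are controlled here by determinism) is the delicate step.
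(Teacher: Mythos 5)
Your outline follows essentially the same route as the paper's proof (determinism of $F_X$ giving DPI relations, sufficiency pinning $I(Z_X;S)$ at $I(X;S)$, the redundancy assumption supplying the $\epsilon_{\rm info}$ slack), and the supervised equalities and the upper bound $I(Z_X^{\rm ssl};T)\le I(X;T)$ are fine as you state them. But the two steps you yourself flag as delicate are exactly the ones left unproved, and they are the substance of the theorem. For the lower bound, your key claim $I(X;S;T)\le I(Z_X;S;T)$ for a sufficient representation is true, but ``sufficiency preserves the full interaction of $X$ with $S$, hence with anything downstream of $S$'' is not an argument: interaction information is not monotone under data processing (it can even be negative), so this must be derived, e.g.\ from the decomposition $I(\cdot;S)=I(\cdot;S;T)+I(\cdot;S|T)$, using $I(Z_X;S)=I(X;S)$ (sufficiency) together with the conditional DPI $I(Z_X;S|T)\le I(X;S|T)$, which holds because $Z_X$ is a deterministic function of $X$, so $S\perp\!\!\!\perp Z_X\mid X,T$. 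Once that inequality is in place, your chain $I(X;T)\le \epsilon_{\rm info}+I(X;S;T)\le \epsilon_{\rm info}+I(Z_X;S;T)\le \epsilon_{\rm info}+I(Z_X;T)$ is the paper's bound rearranged, and it has the nice feature of applying to any sufficient representation at once.

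For the middle inequality $I(Z_X^{\rm ssl};T)\ge I(Z_X^{\rm ssl_{min}};T)$, ``minimality can only shrink the second term'' is not sufficient: writing $I(Z_X;T)=I(Z_X;S;T)+I(Z_X;T|S)$, you must also control the first term, which a priori could differ between the two sufficient representations. The paper closes this by arguing that for any sufficient representation the terms $I(Z_X;S|T)$ and $I(Z_X;S;T)$ are simultaneously pinned at $I(X;S|T)$ and $I(X;S;T)$, so the interaction term is common to $Z_X^{\rm ssl}$ and $Z_X^{\rm ssl_{min}}$, and then uses $I(Z_X^{\rm ssl_{min}};T|S)=0$, which it attributes to the minimality of $H(Z_X|S)$. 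Your proposed route via $I(Z_X^{\rm ssl_{min}};X|S)=0$ yields the same conclusion, since by determinism $I(Z_X;T|S)\le I(Z_X;X|S)=H(Z_X|S)$, but note that asserting $I(Z_X^{\rm ssl_{min}};X|S)=0$ amounts to asserting that the constrained minimum of $H(Z_X|S)$ is zero, the same idealization the paper makes rather than something you get for free. So the concrete gaps are: (i) supply the sufficiency-plus-conditional-DPI bookkeeping that pins (or at least orders) the interaction terms, and (ii) justify that minimality kills $I(Z_X;T|S)$; with those written down, your plan coincides with the paper's proof.
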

\begin{proof}
The proofs contain two parts. The first one is showing the results for the supervised learned representations and the second one is for the self-supervised learned representations.

\underline{\em Supervised Learned Representations:} Adopting Data Processing Inequality (DPI by~\cite{cover2012elements}) in the Markov chain $S\leftrightarrow T \leftrightarrow X \rightarrow Z_X$ (Lemma~\ref{lemm:determinism}), $I(Z_X;T)$ is maximized at $I(X;T)$. Since both supervised learned representations ($Z_X^{\rm sup}$ and $Z_X^{\rm sup_{min}}$) maximize $I(Z_X; T)$, we conclude $I(Z_X^{\rm sup}; T) = I(Z_X^{\rm sup_{min}}; T) = I(X;T)$.

\underline{\em Self-supervised Learned Representations:} 
First, we have 
$$I(Z_X;S) = I(Z_X;T) - I(Z_X;T|S) + I(Z_X;S|T) = I(Z_X;T;S) + I(Z_X;S|T)$$
and 
$$I(X;S) = I(X;T) - I(X;T|S) + I(X;S|T) = I(X;T;S) + I(X;S|T).$$
By DPI in the Markov chain $S\leftrightarrow T \leftrightarrow X \rightarrow Z_X$ (Lemma~\ref{lemm:determinism}), we know
\begin{itemize}
    \item $I(Z_X;S)$ is maximized at $I(X;S)$
    \item $I(Z_X;S;T)$ is maximized at $I(X;S;T)$
    \item $I(Z_X;S|T)$ is maximized at $I(X;S|T)$
\end{itemize}
Since both self-supervised learned representations ($Z_X^{\rm ssl}$ and $Z_X^{\rm ssl_{min}}$) maximize $I(Z_X; S)$, we have $I(Z_X^{\rm ssl}; S) = I(Z_X^{\rm ssl_{min}}; S) = I(X;S)$. Hence, $I(Z_X^{\rm ssl};S;T) = I(Z_X^{\rm ssl_{min}};S;T) = I(X;S;T)$ and $I(Z_X^{\rm ssl};S|T) = I(Z_X^{\rm ssl_{min}};S|T) = I(X;S|T)$. Using the result $I(Z_X^{\rm ssl};S;T) = I(Z_X^{\rm ssl_{min}};S;T) = I(X;S;T)$, we get
$$
I(Z_X^{\rm ssl};T) = I(X;T) - I(X;T|S) + I(Z_X^{\rm ssl};T|S)
$$
and 
$$
I(Z_X^{\rm ssl_{min}};T) = I(X;T) - I(X;T|S) + I(Z_X^{\rm ssl_{min}};T|S).
$$
Now, we are ready to present the inequalities: 
\begin{enumerate}
    \item $I(X;T) \geq I(Z_X^{\rm ssl};T)$ due to $I(X;T|S) \geq I(Z_X^{\rm ssl};T|S)$ by DPI. 
    \item $I(Z_X^{\rm ssl};T) \geq I(Z_X^{\rm ssl_{min}};T)$ due to $I(Z_X^{\rm ssl};T|S) \geq I(Z_X^{\rm ssl_{min}};T|S) = 0$. Since $H(Z_X|S)$ is minimized at $Z_X^{\rm ssl_{min}}$, $I(Z_X^{\rm ssl_{min}};T|S)=0$.
    \item $I(Z_X^{\rm ssl_{min}};T) \geq I(X;T) - \epsilon_{\rm info}$ due to 
    $$
    I(X;T) - I(X;T|S) + I(Z_X^{\rm ssl_{min}};T|S) \geq I(X;T) - I(X;T|S) \geq I(X;T) - \epsilon_{\rm info},
    $$
    where $I(X;T|S) \leq \epsilon_{\rm info}$ by the redundancy assumption.
\end{enumerate}
\end{proof}

\begin{thm}[Task-irrelevant information with a fixed compression gap $I(X;S|T)$, restating Theorem 2 in the main text] The sufficient self-supervised representation (i.e., $I(Z_X^{\rm ssl};T)$) contains more task-irrelevant information in the input than then the sufficient and minimal self-supervised representation (i.e., $I(Z_X^{\rm ssl_{min}};T)$). The later contains an amount of the information, $I(X;S|T)$, that cannot be discarded from the input. Formally,
\vspace{-1mm}
\begin{equation*}
\begin{split}
    I(Z_X^{\rm ssl};X|T) = I(X;S|T) + I(Z_X^{\rm ssl};X|S,T) \geq I(Z_X^{\rm ssl_{\rm min}};X|T) = I(X;S|T) \geq I(Z_X^{\rm sup_{min}};X|T) = 0.
\end{split}
\end{equation*}
\label{theorem:compression_gap2}
\end{thm}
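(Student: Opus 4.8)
The plan is to reduce the whole chain to one decomposition identity that holds for \emph{every} deterministic encoder and then specialize it to the four representations appearing in the statement. First I would apply the chain rule of mutual information in two ways to $I(Z_X; X, S \mid T)$ for an arbitrary $Z_X = F_X(X)$:
\[
I(Z_X; X \mid T) + I(Z_X; S \mid X, T) = I(Z_X; X, S \mid T) = I(Z_X; S \mid T) + I(Z_X; X \mid S, T).
\]
Because $Z_X$ is a deterministic function of $X$ (Lemma~\ref{lemm:determinism}), $I(Z_X; S \mid X, T) = 0$, so the identity collapses to $I(Z_X; X \mid T) = I(Z_X; S \mid T) + I(Z_X; X \mid S, T)$.

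Next I would substitute what is already available from Theorem~\ref{theorem:inclusion2}: its proof shows that both $Z_X^{\rm ssl}$ and $Z_X^{\rm ssl_{min}}$ attain the maximum $I(Z_X;S)=I(X;S)$ and, as a consequence of conditional data processing, $I(Z_X^{\rm ssl};S\mid T) = I(Z_X^{\rm ssl_{min}};S\mid T) = I(X;S\mid T)$. Plugging these into the identity above yields $I(Z_X^{\rm ssl};X\mid T) = I(X;S\mid T) + I(Z_X^{\rm ssl};X\mid S,T)$ — the leftmost equality of the theorem — and $I(Z_X^{\rm ssl_{min}};X\mid T) = I(X;S\mid T) + I(Z_X^{\rm ssl_{min}};X\mid S,T)$.

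It then remains to evaluate the two ``minimal'' residuals. Using the entropy chain rule together with $H(Z_X\mid X,\cdot)=0$ (determinism), I would write $H(Z_X\mid S) = I(Z_X;T\mid S) + I(Z_X;X\mid S,T)$ and $H(Z_X\mid T) = I(Z_X;X\mid T)$. Both summands in the first identity are nonnegative, so minimizing $H(Z_X\mid S)$ subject to $I(Z_X;S)$ being maximal — the defining property of $Z_X^{\rm ssl_{min}}$ — forces $I(Z_X^{\rm ssl_{min}};X\mid S,T)=0$; this is exactly the argument used in the proof of Theorem~\ref{theorem:inclusion2} to obtain $I(Z_X^{\rm ssl_{min}};T\mid S)=0$, now applied to the companion summand. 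Hence $I(Z_X^{\rm ssl_{min}};X\mid T) = I(X;S\mid T)$. The same reasoning applied to $H(Z_X\mid T) = I(Z_X;X\mid T)$ and the defining property of $Z_X^{\rm sup_{min}}$ gives $I(Z_X^{\rm sup_{min}};X\mid T) = 0$. Combining these with $I(Z_X^{\rm ssl};X\mid S,T)\ge 0$ and $I(X;S\mid T)\ge 0$ reproduces the displayed chain of (in)equalities.

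The two chain-rule expansions are routine; the step that deserves care is the minimality argument — establishing that minimizing the conditional entropy actually drives the residual terms $I(Z_X^{\rm ssl_{min}};X\mid S,T)$ and $I(Z_X^{\rm sup_{min}};X\mid T)$ all the way to zero rather than to some positive floor. I would justify it exactly as in Theorem~\ref{theorem:inclusion2}; making it fully airtight amounts to checking that the admissible family of deterministic encoders contains a representation retaining all of $I(X;S)$ (resp. $I(X;T)$) while carrying no surplus entropy, which is the only place a hidden structural assumption on the encoders could enter.
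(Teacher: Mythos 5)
Your proposal is correct and follows essentially the same route as the paper: your two-way chain-rule expansion of $I(Z_X;X,S\mid T)$ with $I(Z_X;S\mid X,T)=0$ is exactly the paper's identity $I(Z_X;X|T)=I(Z_X;S|T)+I(Z_X;X|S,T)$, and you then plug in $I(Z_X^{\rm ssl};S|T)=I(Z_X^{\rm ssl_{min}};S|T)=I(X;S|T)$ from Theorem~\ref{theorem:inclusion2} and invoke the same minimality assertions ($I(Z_X^{\rm ssl_{min}};X|S,T)=0$, $I(Z_X^{\rm sup_{min}};X|T)=0$) that the paper states without further justification. If anything, your expansion $H(Z_X|S)=I(Z_X;T|S)+I(Z_X;X|S,T)$ and your remark that one must check the admissible encoder family actually attains zero surplus entropy make the minimality step slightly more explicit than the paper's own treatment.
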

\begin{proof}
First, we see that 
$$
I(Z_X;X|T) = I(Z_X;X;S|T) + I(Z_X;X|S,T) = I(Z_X;S|T) + I(Z_X;X|S,T),
$$
where $I(Z_X;X;S|T) = I(Z_X;S|T)$ by DPI in the Markov chain $S\leftrightarrow T \leftrightarrow X \rightarrow Z_X$. 

We conclude the proof by combining the following:
\begin{itemize}
    \item From the proof in Theorem~\ref{theorem:inclusion2}, we showed $I(Z_X^{\rm ssl};S|T) = I(Z_X^{\rm ssl_{min}};S|T) = I(X;S|T)$.
    \item Since $H(Z_X|S)$ is minimized at $Z_X^{\rm ssl_{min}}$, $I(Z_X^{\rm ssl_{min}};X|S,T)=0$.
    \item Since $H(Z_X|T)$ is minimized at $Z_X^{\rm sup_{min}}$, $I(Z_X^{\rm sup_{min}};X|T)=0$.
\end{itemize}
\end{proof}

\section{Proof for Proposition 1}

\begin{prop}[Mutual Information Neural Estimation, restating Proposition 1 in the main text] Let $0 < \delta < 1$. There exists $d \in\Nat$ and a family of neural networks $\FF\defeq \{\hat{f}_\theta: \theta\in\Theta\subseteq\RR^d\}$ where $\Theta$ is compact, so that $\exists \theta^*\in\Theta$, with probability at least $1 - \delta$ over the draw of $\{{z_x}_i, s_i\}_{i=1}^n\sim P_{Z_X,S}^{\otimes n}$, $\left|\widehat{I}_{\theta^*}^{(n)}(Z_X; S) - I(Z_X; S)\right| \leq O\left(\sqrt{\frac{d + \log(1/\delta)}{n}}\right)$.
\label{theorem:sample2}
\end{prop}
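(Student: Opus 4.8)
The plan is to recognize this as a statistical estimation result for mutual information expressed via a log-density-ratio functional, and to obtain it as a uniform-convergence bound over a suitably chosen parametric family. The starting point is the variational/dual characterization of $I(Z_X;S)$ in terms of the log density ratio $f^*(z_x,s) = \log\frac{p(s\mid z_x)}{p(s)}$: we have $I(Z_X;S) = \mathbb{E}_{P_{Z_X,S}}[f^*(z_x,s)]$, and $f^*$ is the maximizer of an appropriate concave functional (e.g.\ the Donsker--Varadhan or the ``$\log$-loss'' objective whose optimum recovers the true density ratio). First I would fix a target accuracy and invoke a universal-approximation argument: since $f^*$ (or a clipped version of it, using the thresholding-type truncation that appears elsewhere in the paper) lies in a space that can be approximated to arbitrary precision, there exists a finite dimension $d$ and a compact parameter set $\Theta\subseteq\RR^d$ such that the best network $\hat f_{\theta^\#}\in\FF$ satisfies $\big|\mathbb{E}_{P_{Z_X,S}}[\hat f_{\theta^\#}] - I(Z_X;S)\big|$ is at most a target bias term, which I would absorb into the big-$O$ by taking $d$ large enough (this is where ``there exists $d$'' is used).

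Second, I would control the statistical fluctuation. Define $\theta^*$ to be the empirical maximizer over $\Theta$ of the same objective evaluated on the $n$ samples $\{z_{x_i},s_i\}$. Writing $\widehat I_{\theta}^{(n)}(Z_X;S) = \mathbb{E}_{P^{(n)}_{Z_X,S}}[\hat f_\theta]$, the error decomposes as an approximation (bias) piece plus a generalization piece $\sup_{\theta\in\Theta}\big|\mathbb{E}_{P^{(n)}}[\hat f_\theta] - \mathbb{E}_{P}[\hat f_\theta]\big|$. Since $\Theta$ is compact and the maps $\theta\mapsto\hat f_\theta(z,s)$ are Lipschitz with bounded range (the networks have bounded weights on a compact set, and outputs are clipped), the function class $\{\hat f_\theta:\theta\in\Theta\}$ has covering numbers of order $(1/\varepsilon)^d$, so a standard chaining / Dudley-integral or a direct $\varepsilon$-net-plus-union-bound argument with Hoeffding's (or Bernstein's) inequality yields that this supremum is $O\!\big(\sqrt{(d+\log(1/\delta))/n}\big)$ with probability at least $1-\delta$. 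Combining the two pieces, and noting that $\widehat I_{\theta^*}^{(n)}$ is within the generalization radius of $\mathbb{E}_P[\hat f_{\theta^*}]$, which is in turn within the bias of $I(Z_X;S)$ (using optimality of $\theta^*$ on the empirical objective against $\theta^\#$), gives the claimed bound.

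Since this is explicitly ``restating Theorem~1 by~\citet{tsai2020neural},'' the honest proof is a citation: I would state that the result is exactly the neural mutual-information estimation bound of~\citet{tsai2020neural}, applied to the pair of random variables $(Z_X,S)$, and refer the reader there for the full argument, after sketching the decomposition above so the reader sees why the $\sqrt{(d+\log(1/\delta))/n}$ rate arises. The main obstacle in a self-contained proof would be the first step: making the universal-approximation / bias control rigorous while keeping $\Theta$ compact and the outputs bounded, since a naive unbounded log-density-ratio can be arbitrarily large and would destroy the sub-Gaussian concentration --- this is handled by truncation together with a mild regularity (e.g.\ bounded density ratio or finite $\chi^2$) assumption that is implicit in the cited theorem's hypotheses.
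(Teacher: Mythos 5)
Your proposal matches the paper's own (sketch) proof essentially step for step: boundedness and Lipschitzness of $\hat f_\theta$, universal approximation of the log density ratio to control the bias, and a covering-number uniform-convergence bound of order $\sqrt{(d+\log(1/\delta))/n}$, with the full details deferred to the cited work of \citet{tsai2020neural}. Nothing in your route differs materially from the paper's argument, and the citation-plus-sketch framing is exactly what the paper does.
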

\begin{proof}[Sketch of Proof]
The proof is a standard instance of uniform convergence bound. First, we assume the boundness and the Lipschitzness of $\hat{f}_\theta$. Then, we use the universal approximation lemma of neural networks~\citep{hornik1989multilayer}. Last, combing all these two along with the uniform convergence in terms of the covering number~\citep{bartlett1998sample}, we complete the proof.
\end{proof}

We note that the complete proof can be found in the prior work~\citep{tsai2020neural}. An alternative but similar proof can be found in another prior work~\citep{belghazi2018mine}, which gives us $\left|\widehat{I}_{\theta^*}^{(n)}(Z_X; S) - I(Z_X; S)\right| \leq O\left(\sqrt{\frac{d {\rm log}\,d + \log(1/\delta)}{n}}\right)$. The subtle difference between them is that, given a neural network function space $\Theta \subseteq \mathbb{R}^d$ and its covering number $\mathcal{N}(\Theta, \eta)$,~\citet{tsai2020neural} has $\mathcal{N}(\Theta, \eta) = O\Big((\eta)^{-d}\Big)$ by~\citet{bartlett1998sample} and~\citet{belghazi2018mine} has $\mathcal{N}(\Theta, \eta) = O\Big((\eta / \sqrt{d})^{-d}\Big)$ by~\citet{shalev2014understanding}. Both are valid and the one used by~\citet{tsai2020neural} is tighter.

\section{Proofs for Theorem 3 and 4}
To begin with, we see that 
\begin{equation*}
\begin{split}
    I(Z_X;T) & = I(Z_X;X) - I(Z_X;X|T) + I(Z_X;T|X) = I(Z_X;X) - I(Z_X;X|T) \\
    & = I(Z_X;S) - I(Z_X;S|X) + I(Z_X;X|S) - I(Z_X;X|T) \\
    & = I(Z_X;S) + I(Z_X;X|S) - I(Z_X;X|T) \\
    & \geq I(Z_X;S) - I(Z_X;X|T),
\end{split}
\end{equation*}
where $I(Z_X;T|X) = I(Z_X;S|X) = 0$ due to the determinism from $X$ to $Z_X$. Then, in the proof of Theorem~\ref{theorem:compression_gap2}, we have shown $
I(Z_X;X|T) = I(Z_X;S|T) + I(Z_X;X|S,T)$. Hence, 
\begin{equation*}
\begin{split}
    I(Z_X;T) & \geq I(Z_X;S) - I(Z_X;S|T) - I(Z_X;X|S,T) \\
    & \geq I(Z_X;S) - I(X;S|T) - I(Z_X;X|S,T),
\end{split}
\end{equation*}
where $I(Z_X;S|T) \leq I(X;S|T)$ by DPI.

Theorem 3 and 4 in the main text restated:
\begin{thm}[Bayes Error Rates for Arbitrary Learned Representations, restating Theorem 3 in the main text]
For an arbitrary learned representations $Z_X$, $P_e = {\rm Th}(\bar{P}_e)$ with
\vspace{-1.5mm}
\begin{equation*}
\begin{split}
    \bar{P}_e \leq 1- {\rm exp}^{- \Big( H(T) + I(X;S|T) +I (Z;X|S,T) - \hat{I}_{\theta^*}^{(n)} (Z_X; S) + O\big(\sqrt{\frac{d + \log(1/\delta)}{n}}\big) \Big)}.
\end{split}
\end{equation*}
\label{theorem:bayes_mi2}
\end{thm}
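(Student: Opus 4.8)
The plan is to relate the Bayes error rate $P_e$ to a conditional entropy $H(T|Z_X)$ via a Fano-type inequality, then bound $H(T|Z_X)$ from above using the mutual-information chain established just before the theorem statement, and finally substitute the empirical estimate $\widehat{I}_{\theta^*}^{(n)}(Z_X;S)$ for $I(Z_X;S)$ at the cost of the $O(\sqrt{(d+\log(1/\delta))/n})$ term from Proposition~\ref{theorem:sample2}.

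First I would connect $P_e$ to information quantities. Since $\widehat{T}$ is the (Bayes-optimal) prediction of $T$ from $Z_X$, a standard argument (e.g.\ the variant of Fano's inequality that is tight for the MAP estimator, sometimes attributed to Hellman--Raviv / Feder--Merhav) gives a bound of the form $H(T|Z_X) \leq$ (something monotone in $P_e$); conversely one also gets $P_e \leq 1 - \exp(-H(T|Z_X))$ after the thresholding by $\mathrm{Th}(\cdot)$ is applied — the $\mathrm{Th}$ clipping is precisely what reconciles the information bound (which only controls $\bar P_e$) with the fact that a genuine error rate lies in $[0, 1-1/|T|]$. So the first milestone is: $\bar P_e \leq 1 - \exp(-H(T|Z_X))$ with $P_e = \mathrm{Th}(\bar P_e)$.

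Next I would expand $H(T|Z_X) = H(T) - I(Z_X;T)$ and plug in the inequality derived immediately above the theorem, namely $I(Z_X;T) \geq I(Z_X;S) - I(X;S|T) - I(Z_X;X|S,T)$. This yields
\begin{equation*}
H(T|Z_X) \leq H(T) + I(X;S|T) + I(Z_X;X|S,T) - I(Z_X;S).
\end{equation*}
Finally, invoking Proposition~\ref{theorem:sample2}, with probability at least $1-\delta$ we may replace $-I(Z_X;S)$ by $-\widehat{I}_{\theta^*}^{(n)}(Z_X;S) + O(\sqrt{(d+\log(1/\delta))/n})$, and monotonicity of $x \mapsto 1 - e^{-x}$ propagates this through the Fano bound to give exactly the claimed estimate for $\bar P_e$ (with $Z$ abbreviating $Z_X$ in the superfluous-information term).

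\textbf{Main obstacle.} The delicate point is the first step: justifying the precise form $\bar P_e \leq 1 - \exp(-H(T|Z_X))$ rather than the weaker classical Fano bound $P_e \geq (H(T|Z_X) - 1)/\log|T|$, and correctly handling the thresholding function $\mathrm{Th}$. One route is to go through $\mathbb{E}_{z_x}[1 - \max_t P(\widehat T = t|z_x)] \leq \mathbb{E}_{z_x}[1 - \exp(-H(T|z_x))]$ pointwise (using $1 - \max_t p_t \leq 1 - \exp(-H(\{p_t\}))$ for any distribution, since $\max_t p_t \geq \exp(-H)$), then apply Jensen to $x \mapsto 1-e^{-x}$ (concave) to pull the expectation inside as $1 - \exp(-\mathbb{E}_{z_x} H(T|z_x)) = 1 - \exp(-H(T|Z_X))$. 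Getting the inequality $\max_t p_t \geq e^{-H}$ right — it follows from $H \geq -\log\max_t p_t$ — and checking that the composition with $\mathrm{Th}$ only tightens things, is where the care is needed; the rest is bookkeeping with the mutual-information identities already in hand.
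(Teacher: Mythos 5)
Your proposal is correct and follows essentially the same route as the paper: the Feder--Merhav bound $-\log(1-P_e)\le H(T|Z_X)$, the expansion $H(T|Z_X)=H(T)-I(Z_X;T)$ combined with the inequality $I(Z_X;T)\ge I(Z_X;S)-I(X;S|T)-I(Z_X;X|S,T)$ derived just before the theorem, then Proposition~\ref{theorem:sample2} to replace $I(Z_X;S)$ by its neural estimate, with $\mathrm{Th}(\cdot)$ enforcing the feasible range $[0,1-1/|T|]$. Your additional sketch of why $\max_t p_t\ge e^{-H}$ plus Jensen yields the Feder--Merhav inequality is a fine (and correct) elaboration of the cited bound that the paper simply invokes.
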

\begin{proof}
We use the inequality between $P_e$ and $H(T|Z_X)$ indicated by~\citet{feder1994relations}:
$$
-{\rm log} (1-P_e) \leq H(T|Z_X).
$$

Combining with $I(Z_X;T) = H(T) - H(T|Z_X)$ and $I(Z_X;T) \geq I(Z_X;S) - I(X;S|T) - I(Z_X;X|S,T)$, we have
$$
{\rm log} (1-P_e) \geq - H(T) + I(Z_X;S) - I(X;S|T) - I(Z_X;X|S,T).
$$
Hence, 
$$
P_e \leq 1- {\rm exp}^{- \Big( H(T) + I(X;S|T) +I (Z;X|S,T) - I (Z_X; S)\Big)}.
$$

Next, by definition of the Bayes error rate, we know $0 \leq P_e \leq 1- \frac{1}{|T|}$.

We conclude the proof by combining Proposition~\ref{theorem:sample2}, 
$\left|\widehat{I}_{\theta^*}^{(n)}(Z_X; S) - I(Z_X; S)\right| \leq O\left(\sqrt{\frac{d + \log(1/\delta)}{n}}\right)$.
\end{proof}

\begin{thm}[Bayes Error Rates for Self-supervised Learned Representations, restating Theorem 4 in the main text] Let $P_e^{\rm sup}$/$P_e^{\rm ssl}$/$P_e^{\rm ssl_{min}}$ be the Bayes error rate of the supervised or the self-supervised learned representations $Z_X^{\rm sup}$/$Z_X^{\rm ssl}$/$Z_X^{\rm ssl_{min}}$. Then, $P_e^{\rm ssl} = {\rm Th}(\bar{P}_e^{\rm ssl})$ and $P_e^{\rm ssl_{min}} = {\rm Th}(\bar{P}_e^{\rm ssl_{min}})$ with
{
$$
- \frac{  {\rm log}\,(1-P_e^{\rm sup})  + {\rm log}\,2 }{{\rm log}\,(|T|)}
\leq
\{\bar{P}_e^{\rm ssl}
,
\bar{P}_e^{\rm ssl_{min}}\}
\leq
1- {\rm exp}^{- ({\rm log}\,2+ P_e^{\rm sup} \cdot {\rm log}\,|T| + \epsilon_{\rm info}) } .
$$
}
\label{theorem:bayes_sup2}
\end{thm}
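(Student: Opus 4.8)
The plan is to route everything through the conditional entropy $H(T \mid Z_X)$, using Theorem~\ref{theorem:inclusion2} to control it and the two classical entropy--error inequalities to pass between entropy and Bayes error. First I would record what Theorem~\ref{theorem:inclusion2} already gives: since $H(T\mid Z_X)=H(T)-I(Z_X;T)$ and $I(X;T)\ge I(Z_X^{\rm ssl};T)\ge I(Z_X^{\rm ssl_{min}};T)\ge I(X;T)-\epsilon_{\rm info}$, both quantities $H(T\mid Z_X^{\rm ssl})$ and $H(T\mid Z_X^{\rm ssl_{min}})$ lie in the interval $[\,H(T\mid X),\,H(T\mid X)+\epsilon_{\rm info}\,]$; moreover $H(T\mid X)=H(T)-I(X;T)=H(T)-I(Z_X^{\rm sup};T)=H(T\mid Z_X^{\rm sup})$ because $Z_X^{\rm sup}$ is sufficient. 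So it suffices to sandwich $H(T\mid Z_X^{\rm ssl})$ and $H(T\mid Z_X^{\rm ssl_{min}})$ between $H(T\mid Z_X^{\rm sup})$ and $H(T\mid Z_X^{\rm sup})+\epsilon_{\rm info}$, and then transfer to $P_e$.

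For the upper bound on the Bayes error I would use the Feder--Merhav inequality $-\log(1-P_e)\le H(T\mid Z_X)$ (the one already invoked in the proof of Theorem~\ref{theorem:bayes_mi2}), rewritten as $P_e\le 1-e^{-H(T\mid Z_X)}$. Applying it to $Z_X^{\rm ssl}$ (resp.\ $Z_X^{\rm ssl_{min}}$), substituting the upper sandwich bound, and then bounding $H(T\mid Z_X^{\rm sup})$ by Fano's inequality on the supervised representation, $H(T\mid Z_X^{\rm sup})\le \log 2 + P_e^{\rm sup}\log(|T|-1)\le \log 2+P_e^{\rm sup}\log|T|$, yields the stated upper bound $1-\exp\!\big(-(\log 2+P_e^{\rm sup}\log|T|+\epsilon_{\rm info})\big)$. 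For the lower bound I would instead use Fano on $Z_X^{\rm ssl}$, namely $H(T\mid Z_X^{\rm ssl})\le \log 2+P_e^{\rm ssl}\log|T|$, rearranged to $P_e^{\rm ssl}\ge(H(T\mid Z_X^{\rm ssl})-\log 2)/\log|T|$; inserting the lower sandwich bound $H(T\mid Z_X^{\rm ssl})\ge H(T\mid Z_X^{\rm sup})$ and then the Feder--Merhav bound $H(T\mid Z_X^{\rm sup})\ge -\log(1-P_e^{\rm sup})$ gives exactly $-\big(\log(1-P_e^{\rm sup})+\log 2\big)/\log|T|$. The same two computations go through verbatim for $Z_X^{\rm ssl_{min}}$, since its entropy sandwich is identical.

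Finally I would handle the thresholding exactly as in the proof of Theorem~\ref{theorem:bayes_mi2}: the raw bounds above may lie outside $[0,1-1/|T|]$, but the Bayes error always lies in that interval, so writing the raw bounds as $\bar P_e^{\rm ssl}$ and $\bar P_e^{\rm ssl_{min}}$ and combining with $0\le P_e\le 1-1/|T|$ gives $P_e^{\rm ssl}={\rm Th}(\bar P_e^{\rm ssl})$ and $P_e^{\rm ssl_{min}}={\rm Th}(\bar P_e^{\rm ssl_{min}})$ with the two stated bounds. The main obstacle is the bookkeeping: there are two entropy--error inequalities pointing in opposite directions (Fano yields a $P_e$ lower bound from entropy, Feder--Merhav a $P_e$ upper bound), and each must be applied to the right representation --- the supervised one to turn $H(T\mid Z_X^{\rm sup})$ into $P_e^{\rm sup}$, the self-supervised one to turn $H(T\mid Z_X^{\rm ssl})$ into $P_e^{\rm ssl}$ --- so that the constants $\log 2$, $P_e^{\rm sup}\log|T|$, and $\epsilon_{\rm info}$ land precisely as in the statement rather than with the roles of the upper and lower bounds swapped; a secondary subtlety is justifying $H(T\mid X)=H(T\mid Z_X^{\rm sup})$, which is immediate from the mutual-information equality in Theorem~\ref{theorem:inclusion2}.
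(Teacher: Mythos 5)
Your proposal is correct and follows essentially the same route as the paper's proof: sandwich $H(T\mid Z_X^{\rm ssl})$ and $H(T\mid Z_X^{\rm ssl_{min}})$ via Theorem~\ref{theorem:inclusion2}, convert entropies to error rates with the Feder--Merhav bound $-\log(1-P_e)\le H(T\mid Z_X)$ and the Fano-type bound $H(T\mid Z_X)\le \log 2 + P_e\log|T|$ applied to the appropriate representations, and finish with the thresholding to $[0,1-1/|T|]$. No gaps; the bookkeeping you describe matches the paper's argument exactly.
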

\begin{proof}
We use the two inequalities between $P_e$ and $H(T|Z_X)$ by~\citet{feder1994relations} and~\citet{cover2012elements}:
$$
-{\rm log} (1-P_e) \leq H(T|Z_X)
$$
and
$$
H(T|Z_X) \leq {\rm log}\,2 + P_e {\rm log} |T|. 
$$
Combining the results from Theorem~\ref{theorem:inclusion2}:
$$
I(Z_X^{\rm sup};T) \geq I(Z_X^{\rm ssl};T) \geq I(Z_X^{\rm ssl_{min}};T) \geq I(Z_X^{\rm sup};T) - \epsilon_{\rm info},
$$
we have
\begin{itemize}
    \item the upper bound of the self-supervised learned representations' Bayes error rate: 
\begin{equation*}
\begin{split}
    \{-{\rm log} (1-P_e^{\rm ssl}), -{\rm log} (1-P_e^{\rm ssl_{min}})\} & \leq \{ H(T|Z_X^{\rm ssl}), H(T|Z_X^{\rm ssl_{min}})\} \\ 
    &\leq H(T|Z_X^{\rm sup}) + \epsilon_{\rm info} \\
    & \leq {\rm log}\,2 + P_e^{\rm sup}{\rm log}|T|+ \epsilon_{\rm info} ,
\end{split}
\end{equation*}
which suggests 
$
\{P_e^{\rm ssl}, P_e^{\rm ssl_{min}}\} \leq 1-{\rm exp}^{- ({\rm log}\,2+ P_e^{\rm sup} \cdot {\rm log}\,|T| + \epsilon_{\rm info}) }.
$
    \item the lower bound of the self-supervised learned representations' Bayes error rate: 
\begin{equation*}
\begin{split}
    -{\rm log} (1-P_e^{\rm sup}) &\leq H(T|Z_X^{\rm sup}) \\
    & \leq \{ H(T|Z_X^{\rm ssl}), H(T|Z_X^{\rm ssl_{min}})\} \\
    & \leq \{ {\rm log}\,2 + P_e^{\rm ssl}{\rm log}|T|, \leq \{ {\rm log}\,2 + P_e^{\rm ssl_{min}}{\rm log}|T| \},
\end{split}
\end{equation*}
which suggests
$
- \frac{  {\rm log}\,(1-P_e^{\rm sup})  + {\rm log}\,2 }{{\rm log}\,(|T|)}
\leq
\{{P}_e^{\rm ssl}
,
{P}_e^{\rm ssl_{min}}\}.
$
\end{itemize}

We conclude the proof by having $P_e$ lie in the feasible range: $0 \leq P_e \leq 1- \frac{1}{|T|}$.
\end{proof}

\section{Tighter Bounds for the Bayes Error Rates}

We note that the bound used in Theorems~\ref{theorem:bayes_mi2} and~\ref{theorem:bayes_sup2}:
$
-{\rm log} (1-P_e) \leq H(T|Z_X) \leq {\rm log}\,2 + P_e {\rm log} |T| 
$ is not tight. A tighter bound is $H^-(P_e) \leq H(T|Z_X) \leq H^+(P_e)$ with
$$
H^-(P_e) := H\Big(k(1-P_e)\Big) + k (1-P_e) {\rm log}\,k \,\,{\rm when}\,\,\frac{k-1}{k}\leq P_e \leq \frac{k}{k+1}\,\,,\,\,1\leq k \leq |T|-1,
$$
$$
{\rm and} \quad \quad \quad \quad \quad \quad \quad \quad \quad \quad 
H^+(P_e) := H(P_e) + P_e {\rm log}\,(|T|-1), \quad \quad \quad \quad \quad \quad \quad \quad \quad \quad \quad
$$
where $H(x) = -x {\rm log}(x) - (1-x){\rm log}(1-x)$.

It is clear that $-{\rm log} (1-P_e) \leq H^-(P_e)$ and $H^+(P_e) \leq {\rm log}\,2 + P_e {\rm log}(|T|)$.

Hence, Theorem~\ref{theorem:bayes_mi2} and~\ref{theorem:bayes_sup2} can be improved as follows:

\begin{thm}[Tighter Bayes Error Rates for Arbitrary Learned Representations]
For an arbitrary learned representations $Z_X$, $P_e = {\rm Th}(\bar{P}_e)$ with $\bar{P}_e \leq {P_e}_{\rm upper}$. ${P_e}_{\rm upper}$ is derived from the program
\begin{equation*}
\underset{P_e}{\rm arg\,max}\, H^-(P_e) \leq H(T) - \hat{I}_\theta^{(n)} (Z_X^{\rm ssl}; S) + I(X;S|T) + I(Z_X; X|S, T) + O\big(\sqrt{\frac{d + \log(1/\delta)}{n}}\big).
\end{equation*}
\label{theorem:bayes_mi_tighter}
\end{thm}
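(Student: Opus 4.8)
The plan is to rerun the three-ingredient argument behind Theorem~\ref{theorem:bayes_mi2}, but to feed its conclusion into the sharper Fano-type inequality $H^-(P_e)\le H(T|Z_X)$ rather than the looser $-\log(1-P_e)\le H(T|Z_X)$, and then to invert $H^-$.

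First I would recall the chain-rule inequality established immediately before Theorem~\ref{theorem:bayes_mi2}: using only the determinism Lemma~\ref{lemm:determinism} (so that $I(Z_X;T|X)=I(Z_X;S|X)=0$) and the Data Processing Inequality ($I(Z_X;S|T)\le I(X;S|T)$), one has $I(Z_X;T)\ge I(Z_X;S)-I(X;S|T)-I(Z_X;X|S,T)$. Rewriting the left side as $H(T)-H(T|Z_X)$ gives $H(T|Z_X)\le H(T)-I(Z_X;S)+I(X;S|T)+I(Z_X;X|S,T)$.

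Second, I would invoke Proposition~\ref{theorem:sample2} with the same $d$, $\delta$, and network $\theta^*$: with probability at least $1-\delta$ over the $n$ samples, $I(Z_X;S)\ge \hat I_{\theta^*}^{(n)}(Z_X;S)-O\!\big(\sqrt{(d+\log(1/\delta))/n}\big)$. Plugging this in, on the same high-probability event, $H(T|Z_X)\le H(T)-\hat I_{\theta^*}^{(n)}(Z_X;S)+I(X;S|T)+I(Z_X;X|S,T)+O\!\big(\sqrt{(d+\log(1/\delta))/n}\big)$; call this right-hand side $B$. Combining with the tighter bound $H^-(P_e)\le H(T|Z_X)$ yields $H^-(P_e)\le B$. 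Third I would invert $H^-$: by construction it is the pointwise-minimal conditional entropy consistent with a given error probability, and one checks it is continuous, strictly increasing, and concave on $[0,1-1/|T|]$ with $H^-(0)=0$ and $H^-(1-1/|T|)=\log|T|$. Strict monotonicity makes $\{P_e:H^-(P_e)\le B\}$ an interval $[0,p^\star]$ (clipped at $1-1/|T|$ when $B\ge\log|T|$), so the program $\arg\max\{P_e:H^-(P_e)\le B\}$ returns exactly ${P_e}_{\rm upper}:=p^\star$, hence $P_e\le {P_e}_{\rm upper}$. Since the Bayes error rate always satisfies $0\le P_e\le 1-1/|T|$, wrapping in ${\rm Th}$ produces $P_e={\rm Th}(\bar P_e)$ with $\bar P_e\le {P_e}_{\rm upper}$.

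The only genuinely new step over Theorem~\ref{theorem:bayes_mi2} is the third one: showing that $H^-$ is well-behaved enough for the argmax program to be well-posed and for its optimum to dominate $P_e$. I expect this mild bookkeeping to be the main obstacle, since $H^-$ is defined piecewise on the intervals $[\tfrac{k-1}{k},\tfrac{k}{k+1}]$, so one must verify continuity and monotonicity across the breakpoints $P_e=\tfrac{k}{k+1}$ — i.e.\ that $H(k(1-P_e))+k(1-P_e)\log k$ and its neighbouring branch agree in value and one-sided derivative at each junction, a short computation. Everything else transcribes the proof of Theorem~\ref{theorem:bayes_mi2} verbatim.
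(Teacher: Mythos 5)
Your proposal is correct and follows essentially the paper's own (implicit) route: it reruns the derivation of Theorem 3 — the determinism/DPI inequality $I(Z_X;T)\ge I(Z_X;S)-I(X;S|T)-I(Z_X;X|S,T)$, the identity $I(Z_X;T)=H(T)-H(T|Z_X)$, and Proposition 1 for the estimation error — and simply substitutes the tighter bound $H^-(P_e)\le H(T|Z_X)$ before inverting via the stated program. One minor caveat: $H^-$ is in fact not globally concave and its one-sided derivatives do not agree at the breakpoints $P_e=\tfrac{k}{k+1}$ (left derivative $0$, right derivative unbounded), but this is immaterial, since only continuity across the junctions and monotonicity on each piece — both of which you verify — are needed for the sublevel set $\{P_e: H^-(P_e)\le B\}$ to be an interval and the argmax to dominate $P_e$.
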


\begin{thm}[Tighter Bayes Error Rates for Self-supervised Learned Representations] Let $P_e^{\rm sup}$/$P_e^{\rm ssl}$/$P_e^{\rm ssl_{min}}$ be the Bayes error rate of the supervised or the self-supervised learned representations $Z_X^{\rm sup}$/$Z_X^{\rm ssl}$/$Z_X^{\rm ssl_{min}}$. Then, $P_e^{\rm ssl} = {\rm Th}(\bar{P}_e^{\rm ssl})$ and $P_e^{\rm ssl_{min}} = {\rm Th}(\bar{P}_e^{\rm ssl_{min}})$ with
$$
{P_e}_{\rm lower}^{\rm ssl}
\leq
\{\bar{P}_e^{\rm ssl}
,
\bar{P}_e^{\rm ssl_{min}}\}
\leq
{P_e}_{\rm upper}^{\rm ssl}.
$$

${P_e}_{\rm lower}^{\rm ssl}$ is derived from the following program
$$
\underset{P_e^{\rm ssl}}{\rm arg\,min}\, H^-(P_e^{\rm sup}) \leq H^+(P_e^{\rm ssl})
$$
and ${P_e}_{\rm upper}^{\rm ssl}$ is derived from the following program
$$
\underset{P_e^{\rm ssl}}{\rm arg\,max}\, H^-(P_e^{\rm ssl}) \leq H^+(P_e^{\rm sup})+\epsilon_{\rm info}.
$$
\label{theorem:bayes_sup_tighter}
\end{thm}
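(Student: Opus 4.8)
The plan is to reuse the argument behind Theorem~\ref{theorem:bayes_sup2}, but with the loose Feder--Merhav estimates $-\log(1-P_e)\le H(T\mid Z_X)\le \log 2 + P_e\log|T|$ replaced throughout by the sharpened two-sided bound $H^-(P_e)\le H(T\mid Z_X)\le H^+(P_e)$ stated immediately above. So I would first record, for each of $Z_X^{\rm sup}$, $Z_X^{\rm ssl}$, $Z_X^{\rm ssl_{min}}$, both halves of this sharpened bound with $P_e$ the corresponding Bayes error rate. Then I would convert Theorem~\ref{theorem:inclusion2} into an inequality chain for conditional entropies: since $I(Z_X;T)=H(T)-H(T\mid Z_X)$ and $H(T)$ is independent of the representation, the chain $I(Z_X^{\rm sup};T)\ge I(Z_X^{\rm ssl};T)\ge I(Z_X^{\rm ssl_{min}};T)\ge I(Z_X^{\rm sup};T)-\epsilon_{\rm info}$ is equivalent to
\[
H(T\mid Z_X^{\rm sup})\ \le\ H(T\mid Z_X^{\rm ssl})\ \le\ H(T\mid Z_X^{\rm ssl_{min}})\ \le\ H(T\mid Z_X^{\rm sup})+\epsilon_{\rm info}.
\]

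For the upper bound I would chain $H^-(\bar{P}_e^{\rm ssl})\le H(T\mid Z_X^{\rm ssl})\le H(T\mid Z_X^{\rm sup})+\epsilon_{\rm info}\le H^+(P_e^{\rm sup})+\epsilon_{\rm info}$, which shows $\bar{P}_e^{\rm ssl}$ is a feasible point of the program defining ${P_e}_{\rm upper}^{\rm ssl}$, hence $\bar{P}_e^{\rm ssl}\le{P_e}_{\rm upper}^{\rm ssl}$; the identical chain with $Z_X^{\rm ssl_{min}}$ in place of $Z_X^{\rm ssl}$ works because the conditional-entropy ordering also bounds $H(T\mid Z_X^{\rm ssl_{min}})$ by $H(T\mid Z_X^{\rm sup})+\epsilon_{\rm info}$, giving $\bar{P}_e^{\rm ssl_{min}}\le{P_e}_{\rm upper}^{\rm ssl}$. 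For the lower bound I would run the same two inequalities in the opposite order, $H^-(P_e^{\rm sup})\le H(T\mid Z_X^{\rm sup})\le H(T\mid Z_X^{\rm ssl})\le H^+(\bar{P}_e^{\rm ssl})$ (and analogously for $Z_X^{\rm ssl_{min}}$, using $H(T\mid Z_X^{\rm sup})\le H(T\mid Z_X^{\rm ssl_{min}})$), which makes $\bar{P}_e^{\rm ssl}$, resp.\ $\bar{P}_e^{\rm ssl_{min}}$, a feasible point of the program defining ${P_e}_{\rm lower}^{\rm ssl}$ and yields the left-hand inequality. Finally, exactly as in Theorem~\ref{theorem:bayes_sup2}, every Bayes error rate lies in $[0,1-1/|T|]$, so clamping the derived quantity into this interval recovers the true rate, i.e.\ $P_e^{\rm ssl}={\rm Th}(\bar{P}_e^{\rm ssl})$ and $P_e^{\rm ssl_{min}}={\rm Th}(\bar{P}_e^{\rm ssl_{min}})$.

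I do not expect a genuine obstacle here: the whole proof is a two-step substitution of inequalities followed by an elementary feasibility argument against the two optimization programs. The only places that need care are (i) verifying those programs are well posed --- their feasible sets are nonempty subsets of the compact interval $[0,1-1/|T|]$ on which $H^-$ and $H^+$ are continuous and non-decreasing, so the extrema are attained at the claimed endpoints --- and (ii) keeping track of which side of the conditional-entropy ordering the redundancy slack $\epsilon_{\rm info}$ attaches to, so that it appears inside $H^+(P_e^{\rm sup})+\epsilon_{\rm info}$ on the upper-bound side and is absent on the lower-bound side. The sharpened bound $H^-(P_e)\le H(T\mid Z_X)\le H^+(P_e)$ itself is classical and can be cited rather than reproved.
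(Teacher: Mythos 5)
Your proposal is correct and follows exactly the route the paper intends: the paper states this theorem as an immediate consequence of substituting the sharpened bounds $H^-(P_e)\le H(T\mid Z_X)\le H^+(P_e)$ into the proof of Theorem~\ref{theorem:bayes_sup2}, and your conversion of Theorem~\ref{theorem:inclusion2} into the conditional-entropy chain plus the feasibility argument against the two programs is precisely that substitution, carried out explicitly. No gaps; your added remarks on well-posedness of the programs and on which side the $\epsilon_{\rm info}$ slack attaches are consistent with the paper's argument.
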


\section{More on Visual Representation Learning Experiments}
In the main text, we design controlled experiments on self-supervised visual representation learning to empirically support our theorem and examine different compositions of SSL objectives. In this section, we will discuss 1) the architecture design; 2) different deployments of contrastive/ forward predictive learning; and 3) different self-supervised signal construction strategy. We argue that these three additional set of experiments may be interesting future work. 

\subsection{Architecture Design}
The input image has size $105\times 105$. For image augmentations, we adopt 1) rotation with degrees from $-10^{\circ}$ to $+10^{\circ}$; 2) translation from $-15$ pixels to $+15$ pixels; 3) scaling both width and height from $0.85$ to $1.0$; 4) scaling width from $0.85$ to $1.25$ while fixing the height; and 5) resizing the image to $28 \times 28$. Then, a deep network takes a $28 \times 28$ image and outputs a $1024-$dim. feature vector. The deep network has the structure: $\mathsf{Conv-BN-ReLU-Conv-BN-ReLU-MaxPool-Conv-BN-ReLU-MaxPool-Conv}$ $\mathsf{-BN-ReLU-MaxPool-Flatten-Linear-L2Norm}$. $\mathsf{Conv}$ has 3x3 kernel size with $128$ output channels, $\mathsf{MaxPool}$ has 2x2 kernel size, and $\mathsf{Linear}$ is a $1152$ to $1024$ weight matrix. $R(\cdot)$ is symmetric to $F_X(\cdot)$, which has $\mathsf{Linear-BN-ReLU-UnFlatten-DeConv-BN-ReLU-DeConv-BN-ReLU-DeConv}$ $\mathsf{-BN-ReLU-DeConv}$. $R(\cdot)$ has the exact same number of parameters as $F_X(\cdot)$. Note that we use the same network designs in $I(\cdot , \cdot)$ and $H(\cdot | \cdot)$ estimations. To reproduce the results in our experimental section, please refer to our released code\footnote{\Code}. 

\subsection{Different Deployments for Contrastive and Predictive Learning Objectives}

In the main text, for practical deployments, we suggest Contrastive Predictive Coding (CPC)~\cite{oord2018representation} for ${\rm L}_{CL}$ and assume Gaussian distribution for the variational distributions in ${\rm L}_{FP}$/ ${\rm L}_{IP}$. The practical deployments can be abundant by using different mutual information approximations for ${\rm L}_{CL}$ and having different distribution assumptions for ${\rm L}_{FP}$/ ${\rm L}_{IP}$. In the following, we discuss a few examples.

{\bf Contrastive Learning.} Other than CPC~\cite{oord2018representation}, another popular contrastive learning objective is JS~\cite{bachman2019learning}, which is the lower bound of Jensen-Shannon divergence between $P(Z_S,Z_X)$ and $P(Z_S)P(Z_X)$ (a variational bound of mutual information). Its objective can be written as
\begin{equation*}
\tiny
\underset{Z_S=F_S(S), Z_X=F_X(X), G}{\rm max}
\,\mathbb{E}_{P(Z_S,Z_X)}\Big[-{\rm softplus} \Big(-\langle G({z_x}),G({z_s}) \rangle\Big)\Big] - \mathbb{E}_{P(Z_S)P(Z_X)}\Big[{\rm softplus} \Big(\langle G({z_x}),G({z_s}) \rangle\Big)\Big],
\end{equation*}
where we use ${\rm softplus}$ to denote ${\rm softplus}\,(x) = {\rm log}\,\left(1+{\rm exp}\,(x)\right)$.

{\bf Predictive Learning.} Gaussian distribution may be the simplest distribution form that we can imagine, which leads to Mean Square Error (MSE) reconstruction loss. Here, we use forward predictive learning as an example, and we discuss the case when $\mathcal{S}$ lies in discrete $\{0,1\}$ sample space. Specifically, we let $Q_\phi (S|Z_X)$ be factorized multivariate Bernoulli:
\begin{equation}
\underset{Z_X=F_X(X), R}{\rm max}
\,\mathbb{E}_{P_{S, Z_X}}\Bigg[\sum_{i=1}^p s_i \cdot {\rm log}\,[R(z_x)]_i + (1 - s_i) \cdot {\rm log}\,[1 - R(z_x)]_i\Bigg].
\label{eq:BCE}
\end{equation}
This objective leads to Binary Cross Entropy (BCE) reconstruction loss.

If we assume each reconstruction loss corresponds to a particular distribution form, then by ignoring which variatioinal distribution we choose, we are free to choose arbitrary reconstruction loss. For instance, by switching $s$ and $z$ in eq.~\eqref{eq:BCE}, the objective can be regarded as Reverse Binary Cross Entropy Loss (RevBCE) reconstruction loss. In our experiments, we find RevBCE works the best among \{MSE, BCE, and RevBCE\}. Therefore, in the main text, we choose RevBCE as the example reconstruction loss as ${\rm L}_{FP}$.

{\bf More Experiments.} We provide an additional set of experiments by having \{CPC, JS\} for ${\rm L}_{CL}$ and \{MSE, BCE, RevBCE\} reconstruction loss for  ${\rm L}_{FP}$ in Figure~\ref{fig:diff_objs}. From the results, we find different formulation of objectives bring very different test generalization performance. We argue that, given a particular task, it is challenging but important to find the best deployments for contrastive and predictive learning objectives.

\begin{figure}[t!]
\centering
\vspace{-1mm}
\begin{minipage}{0.35\linewidth}
\resizebox{\linewidth}{!}
{\begin{tabular}{ccc}
\multicolumn{3}{c}{\large (a) Omniglot (Composing SSL Objectives with ${\rm L}_{FP}$ as MSE)}      \\
\toprule
Objective & Trained for & Test Accuracy \\
\midrule
${\rm L}_{CL}$     & $500$ epochs &    $85.59\pm0.05\%$ \\
${\rm L}_{CL}+ {\rm L}_{IP}$   & $500$ epochs &      $85.90\pm0.09\%$ \\
${\rm L}_{FP}$         & $20000$ epochs &     $84.83\pm0.07\%$ \\
${\rm L}_{FP}+ 10{\rm L}_{IP}$          & $20000$ epochs &      $84.96\pm0.04\%$ \\
${\rm L}_{CL} + 10 {\rm L}_{FP}$     & $9000$ epochs &      $86.13\pm0.21\%$ \\
${\rm L}_{CL} + 10 {\rm L}_{FP} + {\rm L}_{IP}$      & $9000$ epochs &      $86.17\pm0.13\%$ \\
\bottomrule
\end{tabular}
}
\end{minipage}
\begin{minipage}{0.64\textwidth}
\includegraphics[width=\textwidth]{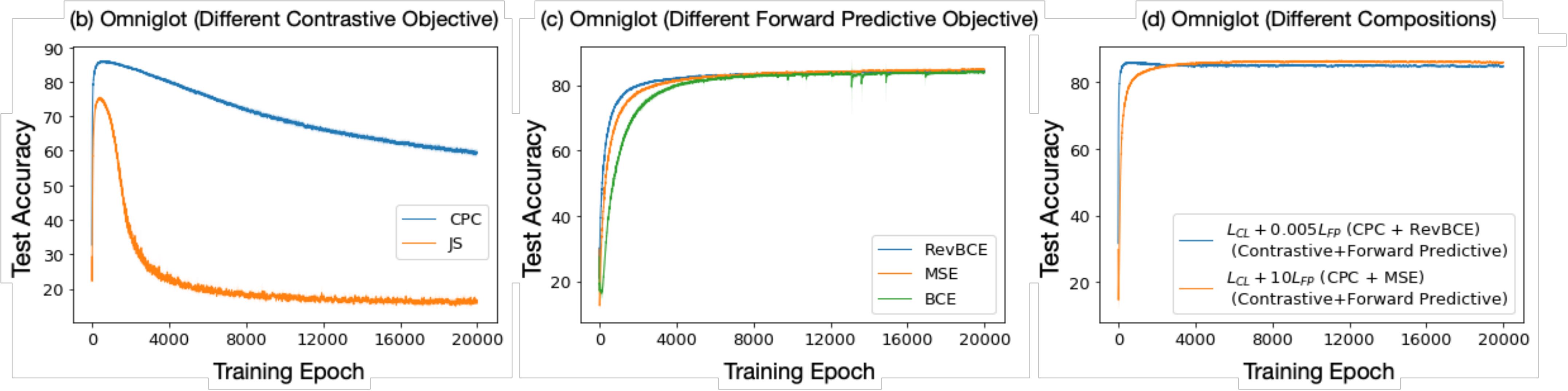}
\end{minipage}
\vspace{-3mm}
\caption{\small Comparisons for different objectives/compositions of SSL objectives on
self-supervised visual representation training. We report mean and its standard error from $5$ random trials.}
\label{fig:diff_objs}
\vspace{-4mm}
\end{figure}

\subsection{Different Self-supervised Signal Construction Strategy}

\begin{figure}[t!]
\begin{center}
\includegraphics[width=0.5\textwidth]{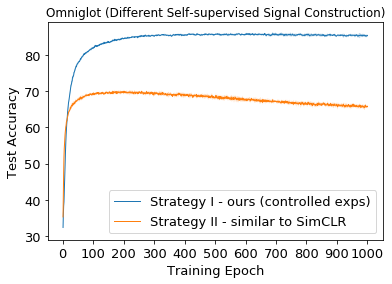}
\vspace{-3mm}
\caption{\small Comparisons for different self-supervised signal construction strategies. The differences between the input and the self-supervised signals are \{drawing styles, image augmentations\} for our construction strategy and only \{image augmentations\} for SimCLR~\cite{chen2020simple}'s strategy. We choose ${\rm L}_{CL}$ as our objective, reporting mean and its standard error from $5$ random trials.}
\label{fig:singal}
\end{center}
\vspace{-4mm}
\end{figure}

In the main text, we design a self-supervised signal construction strategy that the input ($X$) and the self-supervised signal ($S$) differ in \{drawing styles, image augmentations\}. This self-supervised signal construction strategy is different from the one that is commonly adopted in most self-supervised visual representation learning work~\cite{tian2019contrastive,bachman2019learning,chen2020simple}. Specifically, prior work consider the difference between input and the self-supervised signal only in image augmentations. We provide additional experiments in Fig.~\ref{fig:singal} to compare these two different self-supervised signal construction strategies.

We see that, comparing to the common self-supervised signal construction strategy~\cite{tian2019contrastive,bachman2019learning,chen2020simple}, the strategy introduced in our controlled experiments has much better generalization ability to test set. It is worth noting that, although our construction strategy has access to the label information (i.e., we sample the self-supervised signal image from the same character with the input image), our SSL objectives do not train with the labels. Nonetheless, since we implicitly utilize the label information in our self-supervised construction strategy, it will be unfair to directly compare our strategy and prior one. An interesting future research direction is examining different self-supervised signal construction strategy and even combine full/part of label information into self-supervised learning.

\section{Metrics in Visual-Textual Representation Learning}
\begin{itemize}
    \item Subset Accuracy $(A)$ \cite{multilbl}, also know as the Exact Match Ratio (MR), ignores all partially correct (consider them incorrect) outputs and extend accuracy from the single label case to the multi-label setting.
    \[MR = \frac{1}{n}\sum_{i=1}^n \mathbbm{1}_{[Y_i=H_i]}\]
    \item Micro AUC ROC score \cite{aucroc} computes the AUC (Area under the curve) of a receiver operating characteristic (ROC) curve.
\end{itemize}

\end{document}